\documentclass{article}

\usepackage{microtype}
\usepackage{graphicx}
\usepackage{booktabs}
\usepackage{hyperref}
\usepackage{float}
\graphicspath{{figures/}}

\usepackage[accepted]{icml2026}
\usepackage[T1]{fontenc}
\usepackage{times}
\usepackage{amsmath}
\usepackage{placeins}
\usepackage{needspace}
\usepackage{url,xurl}
\usepackage{amssymb}
\usepackage{mathtools,caption}
\usepackage{amsthm}
\usepackage{xcolor}
\usepackage{tikz}
\usetikzlibrary{arrows.meta, positioning, calc, decorations.markings}
\usetikzlibrary{matrix, positioning}
\usepackage[capitalize,noabbrev]{cleveref}

\theoremstyle{plain}
\newtheorem{theorem}{Theorem}[section]

\theoremstyle{definition}
\newtheorem{definition}[theorem]{Definition}

\theoremstyle{remark}
\newtheorem{remark}[theorem]{Remark}

\definecolor{col1}{RGB}{192,57,43}
\definecolor{col2}{RGB}{41,128,185}
\definecolor{colsum}{RGB}{142,68,173}
\definecolor{darkgray}{RGB}{60,60,60}
\definecolor{black}{RGB}{0,0,0}

\newcommand{\AOEcircle}[8]{%
  \pgfmathsetmacro{\pval}{#4}
  \pgfmathsetmacro{\aval}{#5}
  \pgfmathsetmacro{\bval}{#6}
  \pgfmathsetmacro{\angA}{(\aval/\pval)*360}
  \pgfmathsetmacro{\angB}{(\bval/\pval)*360}
  \pgfmathsetmacro{\startB}{90-\angA}
  \pgfmathsetmacro{\rad}{#3}
  \fill[white] (#1,#2) circle (\rad);
  \pgfmathparse{\angA > 0.01}\ifnum\pgfmathresult=1
    \fill[col1] (#1,#2)
      -- ++({90}:\rad)
      arc[start angle=90, end angle={90-\angA}, radius=\rad]
      -- cycle;
  \fi
  \pgfmathparse{\angB > 0.01}\ifnum\pgfmathresult=1
    \fill[col2] (#1,#2)
      -- ++({\startB}:\rad)
      arc[start angle=\startB, end angle={\startB-\angB}, radius=\rad]
      -- cycle;
  \fi
  \draw[darkgray, line width=0.6pt] (#1,#2) circle (\rad);
  \if1#8
    \pgfmathsetmacro{\sumval}{mod(\aval+\bval,\pval)}
    \foreach \kk in {0, \aval, \sumval}{
      \pgfmathsetmacro{\tickangle}{90 - (\kk/\pval)*360}
      \pgfmathtruncatemacro{\klabelint}{\kk}
      \draw[darkgray, line width=0.8pt]
        (#1,#2) ++(\tickangle:\rad*0.85) -- ++(\tickangle:\rad*0.15);
      \node[font=\tiny, darkgray] at
        ({#1 + (\rad*1.3)*cos(\tickangle)},
         {#2 + (\rad*1.3)*sin(\tickangle)}) {\klabelint};
    }
  \fi
  \fill[darkgray] (#1,#2) circle (0.04);
  \draw[-{Stealth[scale=0.6]}, black, thin]
    (#1,#2) ++({90-\angA*0.5}:\rad*0.38)
    arc[start angle={90-\angA*0.5},
        end angle={90-\angA*0.5-40},
        radius=\rad*0.38];
  \node[font=\small\bfseries, darkgray] at (#1, {#2-\rad-0.32}) {#7};
}

\newcommand{\AOEcircleWrap}[8]{%
  \pgfmathsetmacro{\pval}{#4}
  \pgfmathsetmacro{\aval}{#5}
  \pgfmathsetmacro{\bval}{#6}
  \pgfmathsetmacro{\sumval}{mod(\aval+\bval,\pval)}
  \pgfmathsetmacro{\rad}{#3}
  \pgfmathsetmacro{\angA}{(\aval/\pval)*360}
  \pgfmathsetmacro{\angB}{(\bval/\pval)*360}
  \pgfmathsetmacro{\angSum}{(\sumval/\pval)*360}
  \pgfmathsetmacro{\startB}{90-\angA}
  \fill[white] (#1,#2) circle (\rad);
  \pgfmathparse{\angB > 0.01}\ifnum\pgfmathresult=1
    \fill[col2] (#1,#2)
      -- ++({\startB}:\rad)
      arc[start angle=\startB, end angle={\startB-\angB}, radius=\rad]
      -- cycle;
  \fi
  \pgfmathparse{\angA > 0.01}\ifnum\pgfmathresult=1
    \fill[col1] (#1,#2)
      -- ++({90}:\rad)
      arc[start angle=90, end angle={90-\angA}, radius=\rad]
      -- cycle;
  \fi
  \pgfmathparse{\angSum > 0.01}\ifnum\pgfmathresult=1
    \fill[colsum] (#1,#2)
      -- ++({90}:\rad)
      arc[start angle=90, end angle={90-\angSum}, radius=\rad]
      -- cycle;
  \fi
  \draw[darkgray, line width=0.6pt] (#1,#2) circle (\rad);
  \if1#8
    \foreach \kk in {0, \aval, \sumval}{
      \pgfmathsetmacro{\tickangle}{90 - (\kk/\pval)*360}
      \pgfmathtruncatemacro{\klabelint}{\kk}
      \draw[darkgray, line width=0.8pt]
        (#1,#2) ++(\tickangle:\rad*0.85) -- ++(\tickangle:\rad*0.15);
      \node[font=\tiny, darkgray] at
        ({#1 + (\rad*1.3)*cos(\tickangle)},
         {#2 + (\rad*1.3)*sin(\tickangle)}) {\klabelint};
    }
  \fi
  \fill[darkgray] (#1,#2) circle (0.04);
  \pgfmathsetmacro{\arrowAngle}{90 - \angA - \angB*0.5}
  \draw[-{Stealth[scale=0.6]}, black, thin]
    (#1,#2) ++(\arrowAngle:\rad*0.38)
    arc[start angle=\arrowAngle,
        end angle={\arrowAngle-40},
        radius=\rad*0.38];
  \node[font=\small\bfseries, darkgray] at (#1, {#2-\rad-0.32}) {#7};
}

\newcommand{\AOEcircletwo}[8]{%
  \pgfmathsetmacro{\pval}{#4}
  \pgfmathsetmacro{\aval}{#5}
  \pgfmathsetmacro{\bval}{#6}
  \pgfmathsetmacro{\sumval}{mod(\aval+\bval,\pval)}
  \pgfmathsetmacro{\angSum}{(\sumval/\pval)*360}
  \pgfmathsetmacro{\rad}{#3}
  \fill[white] (#1,#2) circle (\rad);
  \pgfmathparse{\angSum > 0.01}\ifnum\pgfmathresult=1
    \fill[colsum] (#1,#2)
      -- ++({90}:\rad)
      arc[start angle=90, end angle={90-\angSum}, radius=\rad]
      -- cycle;
  \fi
  \draw[darkgray, line width=0.6pt] (#1,#2) circle (\rad);
  \if1#8
    \foreach \kk in {0, \sumval}{
      \pgfmathsetmacro{\tickangle}{90 - (\kk/\pval)*360}
      \pgfmathtruncatemacro{\klabelint}{\kk}
      \draw[darkgray, line width=0.8pt]
        (#1,#2) ++(\tickangle:\rad*0.85) -- ++(\tickangle:\rad*0.15);
      \node[font=\tiny, darkgray] at
        ({#1 + (\rad*1.3)*cos(\tickangle)},
         {#2 + (\rad*1.3)*sin(\tickangle)}) {\klabelint};
    }
  \fi
  \fill[darkgray] (#1,#2) circle (0.04);
  \pgfmathparse{\angSum > 0.01}\ifnum\pgfmathresult=1
    \draw[-{Stealth[scale=0.6]}, black, thin]
      (#1,#2) ++({90-\angSum*0.5}:\rad*0.38)
      arc[start angle={90-\angSum*0.5},
          end angle={90-\angSum*0.5-40},
          radius=\rad*0.38];
  \fi
  \node[font=\small\bfseries, darkgray] at (#1, {#2-\rad-0.38}) {#7};
}

\icmltitlerunning{Prime Fourier Embeddings}

\begin{document}

\twocolumn[
\icmltitle{Prime Fourier Embeddings: A Principled Basis for Modular Arithmetic}

\icmlsetsymbol{equal}{*}

\begin{icmlauthorlist}
\icmlauthor{Hyunsang Hwang}{kor}
\icmlauthor{Suhyun Bae}{kor}
\icmlauthor{Donghun Lee}{kor}
\end{icmlauthorlist}

\icmlaffiliation{kor}{Department of Mathematics, Korea University, Seoul, South Korea}

\icmlcorrespondingauthor{Hyunsang Hwang}{hyunsang@korea.ac.kr}
\icmlcorrespondingauthor{Suhyun Bae}{baeshstar@korea.ac.kr}
\icmlcorrespondingauthor{Donghun Lee}{holy@korea.ac.kr}

\icmlkeywords{Machine Learning, ICML, numerical embeddings,
modular arithmetic, representation theory}

\vskip 0.3in
]
\printAffiliationsAndNotice{}

\begin{abstract}
Numbers have algebraic structure that standard neural embeddings often fail to expose. 
We introduce Prime Fourier Embeddings(PFE), which encode integers as prime-indexed $(\cos, \sin)$ pairs derived from the harmonic analysis of $\mathbb{Q}$, providing a pre-structured representation in which modular arithmetic reduces to selecting the relevant prime channel rather than discovering algebraic structure from scratch. 
We prove that any linear map equivariant with respect to the product group action on PFE must be block-diagonal with one independent block per prime — a consequence of Schur's lemma applied to the resulting character decomposition.
For square-free composite moduli, the Chinese Remainder Theorem predicts which prime channels are task-relevant.
Both predictions are confirmed empirically across a systematic sweep of prime counts, composite moduli, and input ranges: ablation studies confirm the block-diagonal prediction, task-relevant channels cause large accuracy drops when ablated ($0.60$--$0.92$ diagonal drop) while task-irrelevant channels are effectively inert (off-diagonal drops at or below the statistical noise floor in the majority of configurations).
\end{abstract}

\section{Introduction}

Modular arithmetic has a natural decomposition structure: by the
Chinese Remainder Theorem (CRT), $(a + b) \bmod N$ reduces to
independent additions modulo each prime factor of $N$. A
representation that exposes this structure directly should make
modular arithmetic easy to learn — the network need only identify
which prime channels are relevant to the task, rather than
reconstruct the algebraic structure from scratch. Yet standard
embeddings provide no such structure. Models with learned embeddings
must recover the relevant periodicity through gradient descent, a
task that is both data-hungry and fragile \cite{arith_right, linear}.
The representational bottleneck is not architectural but
informational: no amount of training can recover structure that is
absent from the input.

Prior work has approached this problem from two directions.
Continuous encodings such as xVal \cite{xval} preserve scalar
magnitude but discard the periodic structure that governs modular
arithmetic. Periodic approaches — including trigonometric embeddings
for tabular data \cite{tabular} and Fourier Number Embeddings (FoNE)
\cite{fone} — recover cyclic structure via sinusoidal features, but
the choice of frequency basis matters critically. FoNE uses base-10
periodicities $[\cos(2\pi x/10^k), \sin(2\pi x/10^k)]$, which
couple independent $2$-adic and $5$-adic structures within a single
shared channel since $10 = 2 \times 5$. For modular arithmetic tasks
whose structure is organized by prime factors, base-10 is misaligned:
it does not isolate prime-local structure into independent valuation
channels, leaving their separation as an implicit task for the
network. More closely related, \citet{stevens2024salsafresca}
introduced angular embeddings for integers modulo $q$ in the context
of machine learning attacks on Learning With Errors, mapping each
integer to a single sinusoidal feature at frequency $2\pi/q$. PFE
generalizes this idea: rather than a single frequency tied to the
task modulus, PFE uses a structured family of prime-indexed
frequencies derived from adelic harmonic analysis, making the basis
principled and independent of any particular modulus.

We introduce \textbf{Prime Fourier Embeddings (PFE)}, which encode
integers as prime-indexed $(\cos, \sin)$ pairs at successive digit
positions. Each prime $p$ contributes $D$ independent two-dimensional
blocks, one per digit depth, giving a representation in which the
residue structure of any integer is directly readable from the
corresponding prime channel. PFE extends our prior work
\citep{bae2026numberscarryembeddings} by formalizing the
block-diagonal structure as a provable consequence of Schur's lemma
and providing systematic empirical validation across prime counts,
composite moduli, and embedding variants. The construction is
motivated by the harmonic analysis of $\mathbb{Q}$: by Pontryagin
duality and the factorization of adelic characters
(Appendix~\ref{app:pontryagin}, Theorems~\ref{thm:pontryagin}
and~\ref{thm:adelic}, \cite{ramakrishnan1999fourier}), any character
on $\mathbb{A}_\mathbb{Q}$ factorizes into independent local
components, one per prime place, with no cross-prime coupling. PFE
implements these local components directly, making the prime basis a
principled rather than arbitrary choice. The broader representational
literature confirms this design principle: Poincar\'{e} embeddings
\cite{Poincar} exploit hyperbolic geometry for hierarchical data,
RoPE and RotatE \cite{roformer, rotate} encode relational structure
via rotation groups, and Clifford neural layers \cite{clifford}
preserve physical symmetries — each works by building the relevant
mathematical structure directly into the representation.

With PFE, the CRT prediction becomes concrete and testable: for a
task modulus $N = p_1 \cdots p_k$, the $k$ prime channels
corresponding to the factors of $N$ should carry task-relevant
information, while all other channels should be irrelevant. We
formalize this as the Block-Diagonal Decomposition Theorem
(Theorem~\ref{thm:block_diag}): any linear map equivariant with
respect to the product group action on PFE must be block-diagonal,
with one independent block per prime and zero coupling across primes.
This follows from viewing each PFE block as the real form of a
character of the common additive group $\mathbb{Z}$ and applying
Schur's lemma to the resulting non-isomorphic character components.
Both the structural prediction and the CRT channel-selection
prediction are confirmed empirically across a systematic sweep of
prime counts, composite moduli, and input ranges. The result is an
embedding whose internal structure is theoretically characterized and
empirically verified.

\section{Construction}
\label{sec:construction}

\begin{figure*}[t]
\vskip 0.2in
\begin{center}
\centerline{%
\begin{tikzpicture}[>=Stealth, every node/.style={font=\small}]

\node[font=\normalsize\bfseries, darkgray] at (-5.2, 4.6) {Input embedding};

\node[anchor=east, darkgray] at (-6.5, 3.8) {$a = 9$};
\fill[col1]  (-6.2, 3.55) rectangle (-5.369, 4.05);
\fill[white] (-5.369, 3.55) rectangle (-3.8, 4.05);
\draw[darkgray, thick] (-6.2, 3.55) rectangle (-3.8, 4.05);

\node[anchor=east, darkgray] at (-6.5, 2.9) {$b = 17$};
\fill[col2]  (-6.2, 2.65) rectangle (-4.631, 3.15);
\fill[white] (-4.631, 2.65) rectangle (-3.8, 3.15);
\draw[darkgray, thick] (-6.2, 2.65) rectangle (-3.8, 3.15);

\node[anchor=east, darkgray] at (-6.5, 2.0) {$\text{label} = 3$};
\fill[colsum] (-6.2, 1.75) rectangle (-5.923, 2.25);
\fill[white]  (-5.923, 1.75) rectangle (-3.8, 2.25);
\draw[darkgray, thick] (-6.2, 1.75) rectangle (-3.8, 2.25);

\node[font=\normalsize\bfseries, darkgray] at (-5.2, -2) {Label embedding};
\node[anchor=east, darkgray] at (-6.5, -3.0) {$a{+}b \bmod N$};
\fill[col1]   (-6.2, -3.25) rectangle (-5.369, -2.75);  
\fill[colsum] (-5.369,-3.25) rectangle (-5.092, -2.75); 
\fill[col2]   (-5.092,-3.25) rectangle (-3.800, -2.75); 
\draw[darkgray, thick] (-6.2, -3.25) rectangle (-3.8, -2.75);

\draw[col1, line width=0.8pt]
  (-6.2, -3.45) -- (-6.2, -3.55) -- (-5.092, -3.55) -- (-5.092, -3.45);
\node[font=\footnotesize, col1] at ({(-6.2 + -5.092)/2}, -3.70) {$a$};

\draw[col2, line width=0.8pt]
  (-5.369, -2.55) -- (-5.369, -2.45) -- (-3.800, -2.45) -- (-3.800, -2.55);
\node[font=\footnotesize, col2] at ({(-5.369 + -3.800)/2}, -2.30) {$b$};
\draw[->, darkgray, line width=1pt]
  (-3.6, 3.35) -- (-0.5, 3.35)
  node[midway, above, font=\small\itshape] {PFE encode};

\AOEcircleWrap{1}{4}{0.7}{23}{9}{17}{$p=23$}{1}
\draw[green!55!black, line width=2.0pt, rounded corners=3pt]
  (-0.1, 2.8) rectangle (2.1, 5.1);
\node[font=\footnotesize\bfseries, green!55!black, anchor=south west]
  at (-0.1, 5.1) {active $(p=N)$};

\AOEcircle{3}{2}{0.7}{29}{9}{17}{$p=29$}{1}

\AOEcircle{5}{0}{0.7}{31}{9}{17}{$p=31$}{1}

\AOEcircle{7}{-2}{0.7}{37}{9}{17}{$p=37$}{1}

\draw[darkgray!50, dashed, line width=0.8pt, rounded corners=4pt]
  (-0.2, 5.6) rectangle (8.3, -3.5);

\draw[->, darkgray, line width=1pt]
  (-0.5,-3.0) -- (-3.6,-3.0)
  node[midway, below, font=\small\itshape] {PFE decode};
\fill[col1]   (0.5,-4.5) rectangle (1.0,-4.2);
\node[anchor=west, font=\footnotesize, darkgray] at (1.1,-4.35)
  {$a \bmod p$};

\fill[col2]   (3.0,-4.5) rectangle (3.5,-4.2);
\node[anchor=west, font=\footnotesize, darkgray] at (3.6,-4.35)
  {$b \bmod p$};

\fill[colsum] (5.5,-4.5) rectangle (6.0,-4.2);
\node[anchor=west, font=\footnotesize, darkgray] at (6.1,-4.35)
  {excess arc};

\end{tikzpicture}}
\caption{PFE encoding for $(9 + 17) \bmod 23 = 3$. The active 
prime $p=23$ encodes the wrap-around addition where the purple region in particular, marks the overlap between the red and blue 
arcs — geometrically, the portion of the circle claimed by 
both $a$ and $b$ when their sum exceeds the modulus($=p$). Its angular size 
is $(a + b - p)/p$, equal to $(a+b) \bmod p$ normalized by 
$p$, which is the label. Inactive primes 
($p=29,31,37$) carry no modulus-specific information.}
\label{fig:aoe_single}
\end{center}
\vskip -0.2in
\end{figure*}

\begin{figure*}[t]
\vskip 0.2in
\begin{center}
\centerline{%
\begin{tikzpicture}[>=Stealth, every node/.style={font=\small}]
 
 
\node[font=\normalsize\bfseries, darkgray] at (-5.2, 4.6) {Input embedding};
 
\node[anchor=east, darkgray] at (-6.5, 3.8) {$a = 13$};
\fill[col1]  (-6.2, 3.55) rectangle (-4.714, 4.05);
\fill[white] (-4.714, 3.55) rectangle (-3.8, 4.05);
\draw[darkgray, thick] (-6.2, 3.55) rectangle (-3.8, 4.05);

\node[anchor=east, darkgray] at (-6.5, 2.9) {$b = 15$};
\fill[col2]  (-6.2, 2.65) rectangle (-4.486, 3.15);
\fill[white] (-4.486, 2.65) rectangle (-3.8, 3.15);
\draw[darkgray, thick] (-6.2, 2.65) rectangle (-3.8, 3.15);

\node[anchor=east, darkgray] at (-6.5, 2.0) {$\text{label} = 7$};
\fill[colsum] (-6.2, 1.75) rectangle (-5.400, 2.25);
\fill[white]  (-5.400, 1.75) rectangle (-3.8, 2.25);
\draw[darkgray, thick] (-6.2, 1.75) rectangle (-3.8, 2.25);
 
\node[font=\normalsize\bfseries, darkgray] at (-5.2, -2.5) {Label embedding};

\node[anchor=east, darkgray] at (-6.5, -3.5) {$a{+}b \bmod N$};
\fill[col2]   (-6.2, -3.75) rectangle (-3.800, -3.25);   
\fill[col1]   (-6.2, -3.75) rectangle (-5.400, -3.25);   
\fill[colsum] (-5.400, -3.75) rectangle (-4.714, -3.25);   
\draw[darkgray, thick] (-6.2, -3.75) rectangle (-3.8, -3.25);

\draw[col1, line width=0.8pt]
  (-6.2, -3.95) -- (-6.2, -4.05) -- (-4.714, -4.05) -- (-4.714, -3.95);
\node[font=\footnotesize, col1] at ({(-6.2 + -5.092)/2}, -4.20) {$a$};

\draw[col2, line width=0.8pt]
  (-5.400, -3.05) -- (-5.400, -2.95) -- (-3.800, -2.95) -- (-3.800, -3.05);
\node[font=\footnotesize, col2] at ({(-5.400 + -3.800)/2}, -2.80) {$b$};

\draw[->, darkgray, line width=1pt]
  (-3.6, 3.35) -- (-0.5, 3.35)
  node[midway, above, font=\small\itshape] {PFE encode};
 
 
\AOEcircletwo{1}{4}{0.7}{3}{1}{0}{$p=3$}{1}

\draw[green!55!black, line width=2.0pt, rounded corners=3pt]
  (-0.1, 2.7) rectangle (2.1, 5.15);
\node[font=\footnotesize\bfseries, green!55!black, anchor=south west]
  at (-0.1, 5.15) {active ($p \mid N$)};
 
\AOEcircletwo{3}{2}{0.7}{5}{2}{0}{$p=5$}{1}
 
\AOEcircletwo{5}{0}{0.7}{7}{0}{0}{$p=7$}{1}
\draw[green!55!black, line width=2.0pt, rounded corners=3pt]
  (3.9, -1.3) rectangle (6.1, 1.15);
\node[font=\footnotesize\bfseries, green!55!black, anchor=south west]
  at (3.9, 1.15) {active ($p \mid N$)};

\AOEcircletwo{7}{-2}{0.7}{11}{6}{0}{$p=11$}{1}


\node[anchor=east, darkgray, font=\footnotesize] at (-7.0, -0.25) {$3{+}3{+}1$};
\fill[colsum]   (-7, 0) rectangle (-6.4, -0.5);  
\draw[darkgray, thick] (-7, 0) rectangle (-6.4, -0.5);
\fill[colsum]   (-6.4, 0) rectangle (-5.8, -0.5);  
\draw[darkgray, thick] (-6.4, 0) rectangle (-5.8, -0.5);
\fill[colsum]   (-5.8, 0) rectangle (-5.6, -0.5);  
\draw[darkgray, thick] (-5.8, 0) rectangle (-5.2, -0.5);
\draw[darkgray, thick] (-5.2, 0) rectangle (-4.6, -0.5);
\draw[darkgray, thick] (-4.6, 0) rectangle (-4.0, -0.5);
\draw[darkgray, thick] (-4.0, 0) rectangle (-3.4, -0.5);
\draw[darkgray, thick] (-3.4, 0) rectangle (-2.8, -0.5);
\node[anchor=east, darkgray, font=\footnotesize] at (-7.0, -1.15) {$7+0$};
\fill[colsum]   (-7, -0.9) rectangle (-5.6, -1.4);  
\draw[darkgray, thick] (-7, -0.9) rectangle (-5.6, -1.4);
\draw[darkgray, thick] (-5.6, -0.9) rectangle (-4.2, -1.4);
\draw[darkgray, thick] (-4.2, -0.9) rectangle (-2.8, -1.4);

\draw[->, darkgray, line width=0.8pt]
  (-5.0, -1.65) -- (-5.0, -2.05);
 
\draw[darkgray!50, dashed, line width=0.8pt, rounded corners=4pt]
  (-0.3, 5.7) rectangle (8, -3.3);
 
\draw[->, darkgray, line width=1pt]
  (-0.5,-0.7) -- (-2.5,-0.7)
  node[midway, below, font=\small\itshape] {PFE decode};
 
\fill[colsum]   (0.5,-4.0) rectangle (1.0,-3.7);
\node[anchor=west, font=\footnotesize, darkgray] at (1.1,-3.85)
  {summed arc};
\end{tikzpicture}}
\caption{PFE encoding for $(13 + 15) \bmod 21 = 7$. Since 
$21 = 3 \times 7$, the prime channels $p=3$ and $p=7$ are load-bearing, carrying residues $(13+15) \bmod 3 = 1$ and 
$(13+15) \bmod 7 = 0$ respectively. The intermediate bars 
show CRT reconstruction: the unique $c \in \mathbb{Z}/21\mathbb{Z}$ 
satisfying $c \equiv 1 \pmod{3}$ and $c \equiv 0 \pmod{7}$ is 
$c = 7$, recovered as $3+3+1 \pmod{21} = 7+0 \pmod{21} = 7$.}
\label{fig:aoe_crt}
\end{center}
\vskip -0.2in
\end{figure*}

The {Prime Fourier Embeddings (PFE)} for a prime $p$ at 
depth $d\ (0\le d\le D-1)$ maps an integer $a$ to the pair
\[
    \mathrm{PFE}_{p,d}(a) = \left[\cos\!\left(\tfrac{2\pi a}{p^{d+1}}
    \right),\ \sin\!\left(\tfrac{2\pi a}{p^{d+1}}\right)\right] 
    \in \mathbb{R}^2.
\]
This is the real decomposition of the character $\chi_1$ of 
$\mathbb{Z}/p^{d+1}\mathbb{Z}$, recording its real and imaginary 
parts:
\[
    \chi_1(a) = e^{2\pi i a / p^{d+1}} = \cos\!\left(\tfrac{2\pi a}
    {p^{d+1}}\right) + i\sin\!\left(\tfrac{2\pi a}{p^{d+1}}\right).
\]
Since $\chi_1$ is a group homomorphism,
\[
    \mathrm{PFE}_{p,d}(a + b \bmod p^{d+1}) = 
    R\!\left(\tfrac{2\pi b}{p^{d+1}}\right) \cdot \mathrm{PFE}_{p,d}(a),
\]
where $R(\theta)$ denotes rotation by $\theta$. Addition modulo 
$p^{d+1}$ therefore acts on $\mathrm{PFE}_{p,d}$ by rotation, 
making the embedding equivariant with respect to the action of 
$\mathbb{Z}/p^{d+1}\mathbb{Z}$. The $(\cos, \sin)$ pairs are not a 
heuristic choice but a 
principled one: they are a real form of the local $p$-adic 
characters arising from the adelic Fourier decomposition of 
$\mathbb{Q}$ \cite{ramakrishnan1999fourier}.

The truncation to depth $D$ has an algebraic interpretation. 
For integers $0 \leq a < p^{D}$, $\mathrm{PFE}_{p,d}(a)$ computes 
the $p$-adic character $\psi_p(a/p^{d+1})$. For larger inputs, only 
the residue $a \bmod p^{d+1}$ is retained, corresponding to the 
image of $a$ under the natural projection 
$\pi_{d} : \mathbb{Z} \twoheadrightarrow \mathbb{Z}/p^{d+1}\mathbb{Z}$. 
The embedding therefore implements $\psi_p \circ \pi_d$ rather than 
$\psi_p$ itself — a finite-group character rather than a $p$-adic one.

\paragraph{Parameter choices.} In all experiments we fix
\[
    \mathcal{P}_0 = \{3, 5, 7, 11, 13, 17, 19, 23, 29,  \quad\quad\]
    \[31,37, 
    41, 43, 47, 53, 59\},
\]
the 16 primes from $3$ to $59$, excluding $2$\footnote{For $p=2$, 
$\sin(2\pi a/2) = 0$ for all integers $a$, so the sine feature 
carries no information. We exclude $p=2$ to avoid a structurally 
degenerate row.}, and depth cap $D = 6$, giving total embedding 
dimension $2 \times 16 \times 6 = 192$ and per-prime row dimension 
$4D = 24$ (the interleaved $(\sin, \cos)$ features of both $a$ and 
$b$ across all $D$ depths). This covers all odd prime factors of any 
odd modulus $N \leq 59$, and any squarefree composite whose prime factors 
lie in $\mathcal{P}_0$. Any task prime $p \notin \mathcal{P}_0$ 
would be invisible to the model.

\section{Theoretical Analysis}
\label{sec:theory}

The PFE embedding is equivariant by construction 
(Section~\ref{sec:construction}). Any linear map $W$ that respects 
this equivariance is an intertwiner. Schur's Lemma 
(Appendix~\ref{app:theory}, Theorem~\ref{thm:schur}) then imposes 
rigidity: any intertwiner between non-isomorphic irreducible 
representations must be zero.

Since $\mathbb{Z}/p^{d+1}\mathbb{Z}$ is abelian, every irreducible 
representation is one-dimensional 
(Appendix~\ref{app:theory}, Theorem~\ref{thm:irreps}), so in 
particular $\chi_1(a) = e^{2\pi i a/p^{d+1}}$ is an irrep. The 
block $\mathrm{PFE}_{p,d}$ implements this $\chi_1$ specifically. 
Across distinct primes or distinct depths, the groups 
$\mathbb{Z}/p^{d+1}\mathbb{Z}$ have different orders and are 
therefore non-isomorphic, so their $\chi_1$ characters are 
non-isomorphic irreps. By Schur's Lemma (Theorem~\ref{thm:schur}), 
any intertwiner between these blocks must be zero.

\subsection{Block-Diagonal Structure}

For each prime $p \in \mathcal{P}$ and depth $d = 0, \ldots, D-1$, 
the block $\mathrm{PFE}_{p,d}$ carries the character $\chi_1$ of 
$\mathbb{Z}/p^{d+1}\mathbb{Z}$, realized as a $2 \times 2$ rotation 
by angle $\theta_{p,d} = 2\pi/p^{d+1}$. The full embedding is 
doubly indexed:
\[
    \mathbf{h} = \bigoplus_{p \in \mathcal{P}} \bigoplus_{d=0}^{D-1} 
    \mathrm{PFE}_{p,d}(a) \;\in\; \mathbb{R}^{2 \cdot |\mathcal{P}| 
    \cdot D},
\]
where $\mathcal{P} \subseteq \mathcal{P}_0$. This gives rise to two 
distinct levels of block structure characterized by the following 
theorem.

\begin{theorem}[Block-Diagonal Decomposition of the Classifier]
\label{thm:block_diag}
Let $W$ be any linear map that is equivariant with respect to the 
action of $\mathbb{Z}/p^{d+1}\mathbb{Z}$ on each block 
$\mathrm{PFE}_{p,d}$. Then $W$ decomposes as
\[
    W = \bigoplus_{p \in \mathcal{P}} W_p, \qquad 
    W_p = \bigoplus_{d=0}^{D-1} W_{p,d},
\]
where each $W_{p,d}$ acts only within the $(p,d)$-subspace. In 
particular there is no coupling between blocks of distinct primes, 
and no coupling between blocks of distinct depths within the same 
prime.
\end{theorem}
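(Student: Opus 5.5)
The plan is to make the hypothesis precise as a statement about a single group acting on the whole embedding space, and then apply Schur's Lemma (Theorem~\ref{thm:schur}) to the resulting character decomposition.

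\textbf{Setting up one common group.} The informal phrase ``equivariant with respect to $\mathbb{Z}/p^{d+1}\mathbb{Z}$ on each block'' involves different groups for different blocks. So I would first fix one group acting on all of $\mathbb{R}^{2|\mathcal{P}|D}$. I will use the common additive group $\mathbb{Z}$, acting by translation $a\mapsto a+b$ in every block at once. This gives
\[
\rho(b)=\bigoplus_{p\in\mathcal{P}}\bigoplus_{d=0}^{D-1} R\!\left(\tfrac{2\pi b}{p^{d+1}}\right).
\]
The alternative is the product group $G=\prod_{p,d}\mathbb{Z}/p^{d+1}\mathbb{Z}$, acting factorwise. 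In that version each factor moves only its own block, and the argument below becomes immediate. The $\mathbb{Z}$-version is the stronger statement, so I would treat it. The hypothesis then reads $W\rho(b)=\rho(b)W$ for all $b$, with $W$ an endomorphism of the embedding space, or more generally a map into a space carrying the same action.

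\textbf{Decomposing into characters.} Complexify. Each rotation block $R(2\pi b/p^{d+1})$ is diagonalized over $\mathbb{C}$ by the vectors $(1,\mp i)$, with eigencharacters
\[
\chi_{p,d}^{\pm}(b)=e^{\pm 2\pi i b/p^{d+1}}.
\]
These are one-dimensional, hence irreducible, representations of $\mathbb{Z}$ (Theorem~\ref{thm:irreps}). A character of $\mathbb{Z}$ is determined by its frequency in $\mathbb{R}/\mathbb{Z}$, so the key step is to check that the frequencies $\pm 1/p^{d+1} \bmod 1$ are pairwise distinct over all $(p,d)$ and both signs.

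\textbf{Main obstacle: distinctness of the frequencies.} This is the one step I expect to need care, and it is elementary.
\begin{itemize}
\item Two frequencies with the same sign: $1/p^{d+1}=1/q^{e+1}$ forces $p^{d+1}=q^{e+1}$, and unique factorization then gives $p=q$ and $d=e$.
\item Opposite signs: $1/p^{d+1}\equiv -1/q^{e+1} \pmod 1$ would require $1/p^{d+1}+1/q^{e+1}\in\mathbb{Z}$. The sum is positive and at most $2/3<1$, so this is impossible.
\end{itemize}
The second case is exactly where excluding $p=2$ matters. For $p=2$, $d=0$, the two characters $e^{\pm \pi i b}$ coincide, and the block degenerates.

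I would also point out that the paper's informal reason, that the groups have ``different orders'', is not what drives the argument. What matters is that these are distinct characters of one common group.

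\textbf{Applying Schur and descending to $\mathbb{R}$.} Since all characters are distinct, each isotypic component is one-dimensional. An intertwiner $W_{\mathbb{C}}$ must map each isotypic component into the same isotypic component: by Schur's Lemma, its component between distinct characters vanishes. Therefore $W_{\mathbb{C}}$ preserves each span $\langle v^+_{p,d},v^-_{p,d}\rangle$, which is exactly the complexification of the $(p,d)$-subspace.

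Because $W$ is real, it preserves the real $(p,d)$-subspace itself. This gives
\[
W=\bigoplus_{p}\bigoplus_d W_{p,d},
\]
with no coupling between distinct primes and none between distinct depths of the same prime. Grouping the depth blocks for a fixed prime yields $W_p$.

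As a byproduct, $W_{p,d}$ commutes with all rotations, so it has the form $\alpha I+\beta J$, a scaled rotation. The same argument applies verbatim when each block carries the features of both $a$ and $b$: the two copies share the same character, so coupling is allowed within a $(p,d)$ block but still forbidden across blocks.
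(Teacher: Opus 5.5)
Your proposal is correct and takes essentially the same route as the paper's proof: treat every block as a representation of the common group $\mathbb{Z}$, check that distinct $(p,d)$ give distinct characters, and conclude with Schur's Lemma. Your version is tighter than the paper's in two places the paper passes over. First, you account for the conjugate character in each complexified rotation block, ruling out $\chi^{+}_{p,d}=\chi^{-}_{q,e}$ via $1/p^{d+1}+1/q^{e+1}\le 2/3$, which is where $p\ne 2$ enters. Second, you carry out the descent from $\mathbb{C}$ back to the real blocks.
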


\begin{proof}
We view each PFE block as the real two-dimensional realization 
of a complex character of the common additive group $\mathbb{Z}$. 
Specifically, $\chi_{p,d} : \mathbb{Z} \to S^1$ defined by 
$\chi_{p,d}(a) = e^{2\pi i a / p^{d+1}}$ factors through the 
quotient $\mathbb{Z} \twoheadrightarrow \mathbb{Z}/p^{d+1}\mathbb{Z}$. 
All blocks are therefore representations of the same group 
$\mathbb{Z}$, and two blocks $(p,d)$ and $(q,d')$ correspond to 
distinct characters of this common action whenever 
$p^{d+1} \neq q^{d'+1}$.

\textit{Distinct primes.} For $p \neq q$, the periods $p^{d+1}$ 
and $q^{d'+1}$ are coprime, so $p^{d+1} \neq q^{d'+1}$. Moreover, 
any group homomorphism $\mathbb{Z}/p^{d+1}\mathbb{Z} \to 
\mathbb{Z}/q^{d'+1}\mathbb{Z}$ is trivial, since the image order 
must divide both $p^{d+1}$ and $q^{d'+1}$, which are coprime by 
the CRT decomposition of $\mathbb{Z}/N\mathbb{Z}$ 
(Theorem~\ref{thm:crt}). After complexification, $\chi_{p,d}$ and 
$\chi_{q,d'}$ are non-isomorphic one-dimensional representations 
of $\mathbb{Z}$. By Schur's Lemma (Theorem~\ref{thm:schur}), any 
$\mathbb{Z}$-equivariant linear map between non-isomorphic 
irreducible representations must be zero. Hence $W_{p,d;\,q,d'} 
= 0$ for all $p \neq q$.

\textit{Distinct depths, same prime.} For fixed $p$ and $d \neq 
d'$, the periods $p^{d+1}$ and $p^{d'+1}$ differ, so $\chi_{p,d}$ 
and $\chi_{p,d'}$ are distinct characters of $\mathbb{Z}$ and 
hence non-isomorphic as one-dimensional complex representations. 
Schur's Lemma again forces any equivariant linear map between them 
to zero. Hence $W_{p,d;\,p,d'} = 0$ for $d \neq d'$.

Combining both cases, $W$ is block-diagonal with one independent 
block per $(p,d)$ pair.
\end{proof}

\begin{remark}[Strength of the two independence guarantees]
\label{rem:depth}
The between-prime and within-prime independence are of qualitatively 
different algebraic character. For distinct primes $p \neq q$, the CRT decomposition 
(Theorem~\ref{thm:crt}) separates the prime-local residue 
coordinates into independent coprime components. Any group 
homomorphism from $\mathbb{Z}/p^{d+1}\mathbb{Z}$ to 
$\mathbb{Z}/q^{d'+1}\mathbb{Z}$ is trivial since the image 
order must divide both $p^{d+1}$ and $q^{d'+1}$, which are 
coprime. This is strong independence. 
For distinct depths $d \neq d'$ within the same prime $p$, there is 
a natural surjection 
$\mathbb{Z}/p^{d'+1}\mathbb{Z} \twoheadrightarrow 
\mathbb{Z}/p^{d+1}\mathbb{Z}$ for $d' > d$, so the deeper block 
contains strictly more information. Schur's Lemma forces 
$W_{p,d;\,p,d'} = 0$ but does not force the network to distribute 
importance across depth levels — it only forbids cross-depth linear 
coupling.
\end{remark}

With respect to the basis 
$\{\mathbf{e}_{p,d,\cos}, \mathbf{e}_{p,d,\sin}\}$, the action of 
$\mathbb{Z}/p^{d+1}\mathbb{Z}$ on $\mathrm{PFE}_{p,d}$ is realized 
in $\mathrm{GL}(\mathbb{R}^2)$ as
\[
    R(\theta_{p,d}) = \begin{bmatrix} \cos\theta_{p,d} & 
    -\sin\theta_{p,d} \\ \sin\theta_{p,d} & \cos\theta_{p,d} 
    \end{bmatrix}, \qquad \theta_{p,d} = \frac{2\pi}{p^{d+1}}.
\]
The full weight matrix $W$ therefore has the nested block structure 
shown in Figure~\ref{fig:block_diag}.

\begin{figure}[t]
\vskip 0.2in
\begin{center}
\renewcommand{\arraystretch}{1.5}
$W$=\(
\begin{bmatrix}
    R_3&&&&\\
    &R_5&&&\\
    &&R_7&&\\
    &&&R_{11}&\\
    &&&&\ddots
\end{bmatrix}
\)
\end{center}
where\\
\begin{center}
\renewcommand{\arraystretch}{1.8}
\(
R_p\!:=\!\begin{bmatrix}
    R(\theta_{p,0})&&&\\
    &R(\theta_{p,1})&&\\
    &&\ddots&\\
    &&&R(\theta_{p,D-1})
\end{bmatrix}
\)
\caption{Nested block-diagonal structure of an equivariant linear 
map $W$. Each $R(\theta_{p,d}) \in \mathrm{GL}(\mathbb{R}^2)$ is a 
$2\times 2$ rotation with $\theta_{p,d} = 2\pi/p^{d+1}$.}
\label{fig:block_diag}
\end{center}
\vskip -0.2in
\end{figure}

\begin{remark}
The question of \emph{why} gradient descent converges to an 
equivariant solution rather than breaking symmetry remains open 
and is left for future work.
\end{remark}

\section{Experiments}
\label{sec:experiments}

We design two ablation experiments to empirically validate the 
block-diagonal decomposition predicted by 
Theorem~\ref{thm:block_diag}. Both experiments follow the same 
methodology: train a PFE-based classifier on a modular arithmetic 
task, then systematically zero out individual prime rows of the 
embedding and measure the resulting accuracy drop. A row is 
\textbf{task-relevant} if its prime divides the modulus, and 
\textbf{task-irrelevant} otherwise. The key prediction is that 
task-relevant rows cause large accuracy drops when ablated, while 
task-irrelevant rows cause near-zero drops.

\paragraph{Experimental setup.} All models share the same 
architecture: a shared per-prime encoder (two linear layers 
$24 \to 64 \to 32$, ReLU activations) processes each prime 
row independently, and a two-layer classifier 
($|\mathcal{P}| \times 32 \to 128 \to N$, ReLU then linear) 
maps the concatenated row representations to $N$ output 
logits. The PFE features are deterministic and non-trainable. 
Each prime row contains the interleaved $(\sin, \cos)$ 
features of both $a$ and $b$ across all $D$ digit levels, 
giving row dimension $4D = 24$. Models are trained with Adam 
($\mathrm{lr} = 3 \times 10^{-3}$, default $\beta_1, \beta_2$) 
and cross-entropy loss on $80{,}000$ uniformly sampled 
$(a, b)$ pairs with $a, b \in \{0, \ldots, r-1\}$, using an 
80/20 train/test split (64,000 train, 16,000 test). 
Datasets are constructed from the complete set of distinct 
$(a, b)$ pairs; the 80/20 split is applied to unique pairs 
before any sampling, with zero train/test overlap verified 
for every configuration.

Experiment~1 trains for 25 epochs with batch size 1024; 
Experiment~2 trains for 40 epochs to accommodate the larger 
output space ($N$ up to $385$). All experiments use a single 
fixed random seed (42); variance across the sweep was 
negligible. Row ablation sets all $4D$ embedding dimensions 
of prime $p$ to zero — zeroing both the $a$ and $b$ features 
for that prime across all depths — and evaluates the frozen 
model on the held-out test set without retraining. The 
convergence threshold is $0.85$ for Experiment~1 and $0.70$ 
for Experiment~2. No error bars are reported as all 
experiments use a single seed; the sweep covers 35 
configurations for Experiment~1 and 50 for Experiment~2, 
providing breadth in place of repeated trials.

\subsection{Experiment 1: Prime Specialization on Single-Prime Tasks}

\paragraph{Setup.} We train on $(a + b) \bmod p$ for each prime $p$ 
in a fixed subset $\mathcal{P}$ of $\mathcal{P}_0$, varying 
$|\mathcal{P}| \in \{4, 6, 8, 10, 12, 14, 16\}$ and input range 
$r \in \{100, 500, 1000, 2000, 4000\}$. For each configuration we 
train a separate model per task prime, ablate each row, and record 
the diagonal drop (ablating the task-prime row) and the off-diagonal 
drop (mean drop when ablating any other row). A model is considered \textbf{converged} if it achieves 
test accuracy above $0.85$ at any point during training. 
The diagonal drop $\Delta_\text{diag}$ is defined as the 
decrease in test accuracy when the task-prime row is zeroed; 
the off-diagonal drop $\Delta_\text{off}$ is the mean 
decrease across all other prime rows.

\paragraph{Results.} Across all configurations, the diagonal drop 
is substantially larger than the off-diagonal drop. At basis sizes 
$|\mathcal{P}| \leq 8$, the off-diagonal drop is statistically 
indistinguishable from zero in the majority of configurations 
(below a 2-standard-error bound of $0.011$--$0.032$ depending on 
test-set size), indicating near-perfect prime isolation. At 
$|\mathcal{P}| \geq 10$, off-diagonal drops are reliably above 
the noise floor, ranging from $0.03$ to $0.09$ across configurations, 
while diagonal drops remain at $0.82$--$0.92$. Convergence above 
$85\%$ test accuracy is consistent across all configurations once 
the input range exceeds the task prime. At small input ranges 
($r = 100$) with large prime sets, selectivity is somewhat reduced 
— a finite-sample effect when the input range is comparable to the 
task prime. These results are consistent with the block-diagonal 
structure predicted by Theorem~\ref{thm:block_diag}: task-relevant 
channels dominate accuracy, and task-irrelevant channels are 
effectively inert.
\begin{figure}[t]
    \includegraphics[width=\linewidth]{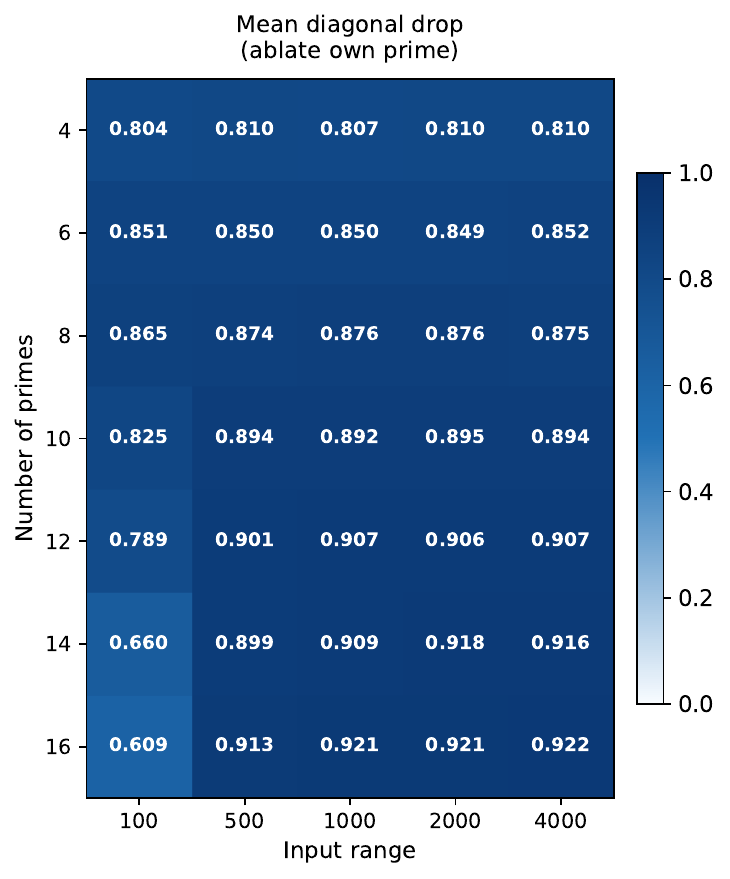}
\caption{Experiment~1. Mean diagonal drop 
(ablate own prime). Rows index $|\mathcal{P}|$; columns index 
input range $r$.}
\label{fig:exp1_diag}
\end{figure}

\begin{figure}[t]
    \includegraphics[width=\linewidth]{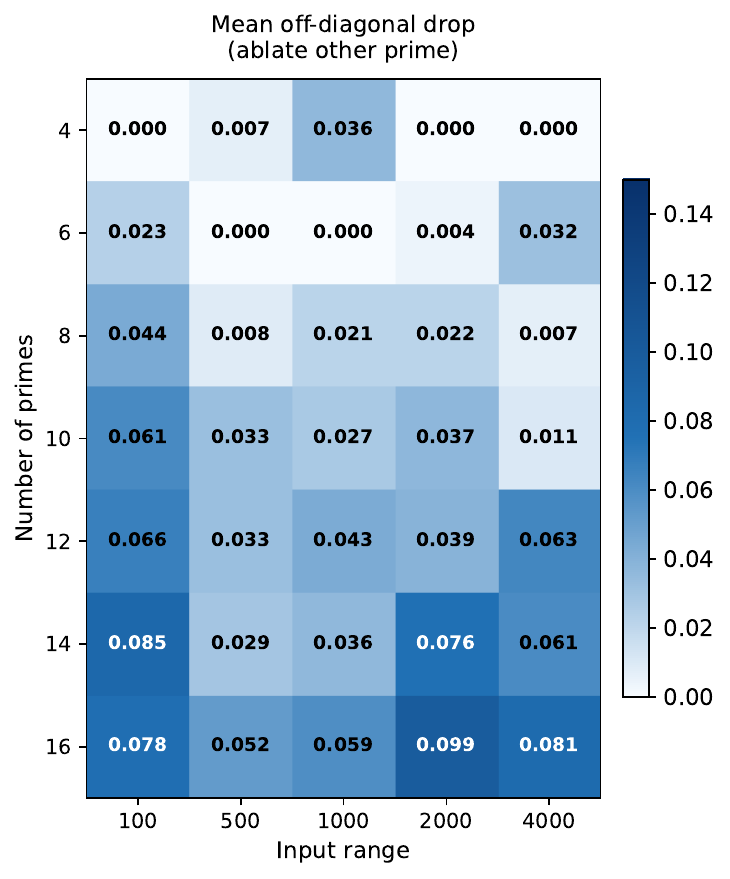}
\caption{Experiment~1. Mean off-diagonal drop 
(ablate other prime). Rows index $|\mathcal{P}|$; columns index 
input range $r$.}
\end{figure}

\begin{figure}[t]
    \includegraphics[width=\linewidth]{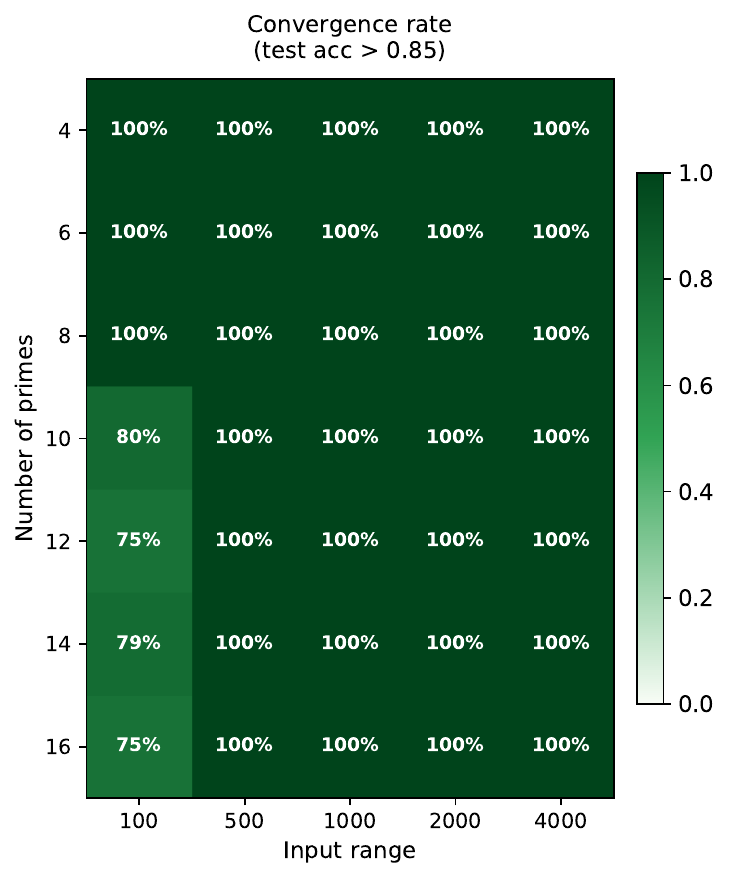}
\caption{Experiment~1. Convergence rate 
(test acc $> 0.85$). Rows index $|\mathcal{P}|$; columns index 
input range $r$.}
\end{figure}

\paragraph{Matched control: PFE-Shuffled ablation profiles.}
To verify that the clean diagonal/off-diagonal separation is a 
consequence of the equivariant block structure rather than a 
trivial property of any model trained on modular arithmetic, we 
repeat the row-ablation protocol on a PFE-Shuffled variant: the 
same $(\cos,\sin)$ features at the same prime frequencies, but 
with the prime-to-row correspondence destroyed by a fixed random 
permutation applied at initialization. The results are shown in 
Figures~\ref{fig:shuf_diag} and~\ref{fig:shuf_off}.

PFE-Shuffled's diagonal drops range from $0.10$ to $0.55$ — 
substantially lower than PFE's $0.54$--$0.93$ — and do not 
exhibit the consistent monotone structure visible in the PFE 
heatmap. More strikingly, PFE-Shuffled's off-diagonal drops 
range from $0.14$ to $0.46$: ablating any row, whether or not 
it corresponds to the task prime, causes a comparably large 
accuracy drop. The model has distributed the task-relevant 
information across all rows rather than routing it into the 
task-prime channel. This is the opposite of what PFE produces, 
and directly answers the matched-control concern: the clean 
ablation structure in Experiment~1 is not a property that any 
sufficiently expressive model exhibits, but a consequence of 
the equivariant block-diagonal structure that PFE provides and 
PFE-Shuffled destroys.

\begin{figure}[t]
    \includegraphics[width=\linewidth]{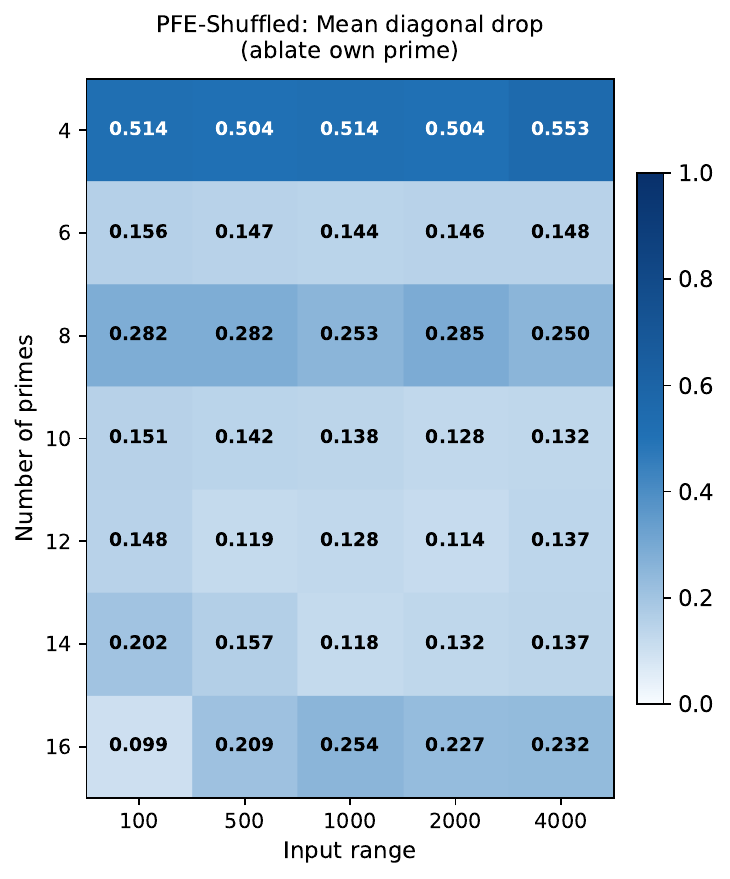}
\caption{PFE-Shuffled matched control: mean diagonal drop 
(ablate own prime row). Rows index $|\mathcal{P}|$; columns index 
input range $r$.}
\label{fig:shuf_diag}
\end{figure}

\begin{figure}[t]
    \includegraphics[width=\linewidth]{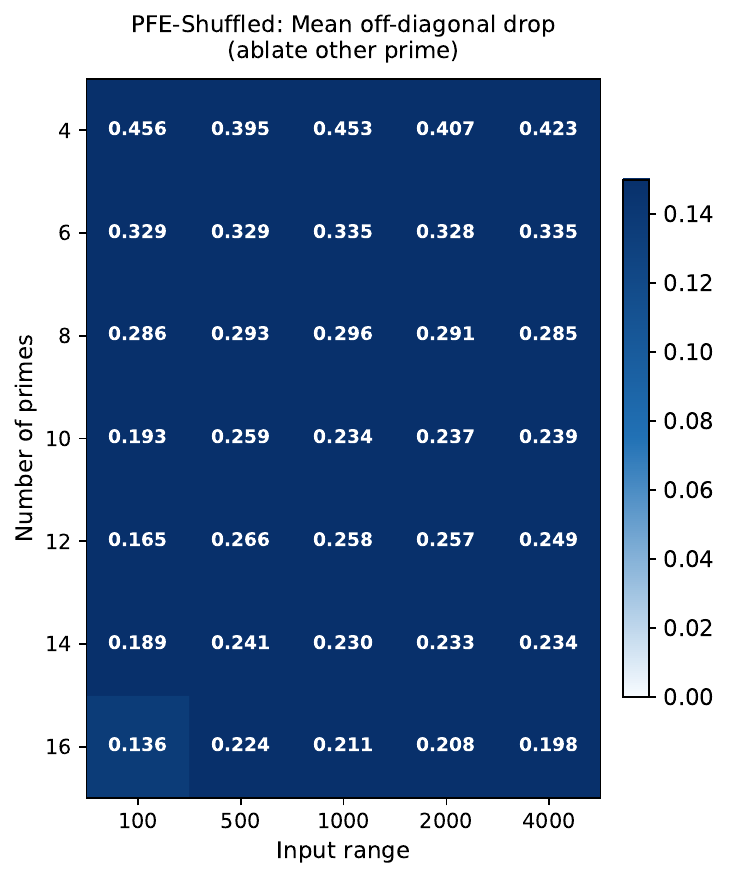}
\caption{PFE-Shuffled matched control: mean off-diagonal drop 
(ablate other prime row). Rows index $|\mathcal{P}|$; columns index 
input range $r$.}
\label{fig:shuf_off}
\end{figure}

\subsection{Experiment 2: CRT Decomposition on Composite Moduli}

\paragraph{Setup.} We train on $(a + b) \bmod N$ for squarefree 
composite moduli $N$, covering two-factor composites 
$N \in \{15, 21, 33, 35, 55, 77\}$ and three-factor composites 
$N \in \{105, 165, 231, 385\}$, each formed as a product of primes 
from $\{3, 5, 7, 11\}$, embedded within the full prime basis 
$\mathcal{P} = \{3, 5, 7, 11, 13, 17, 19, 23\}$. We sweep over 
input ranges $r \in \{100, 500, 1000, 2000, 4000\}$. The CRT 
prediction is that for $N = p_1 \cdots p_k$,the $k$ rows 
corresponding to the prime factors of $N$ should activate, while 
all remaining rows remain near-zero. The factor drop is the mean accuracy decrease when ablating 
a prime $p$ that divides $N$; the nonfactor drop is the 
mean decrease when ablating a prime $p \nmid N$. 
\begin{figure}[t]
    \includegraphics[width=\linewidth]{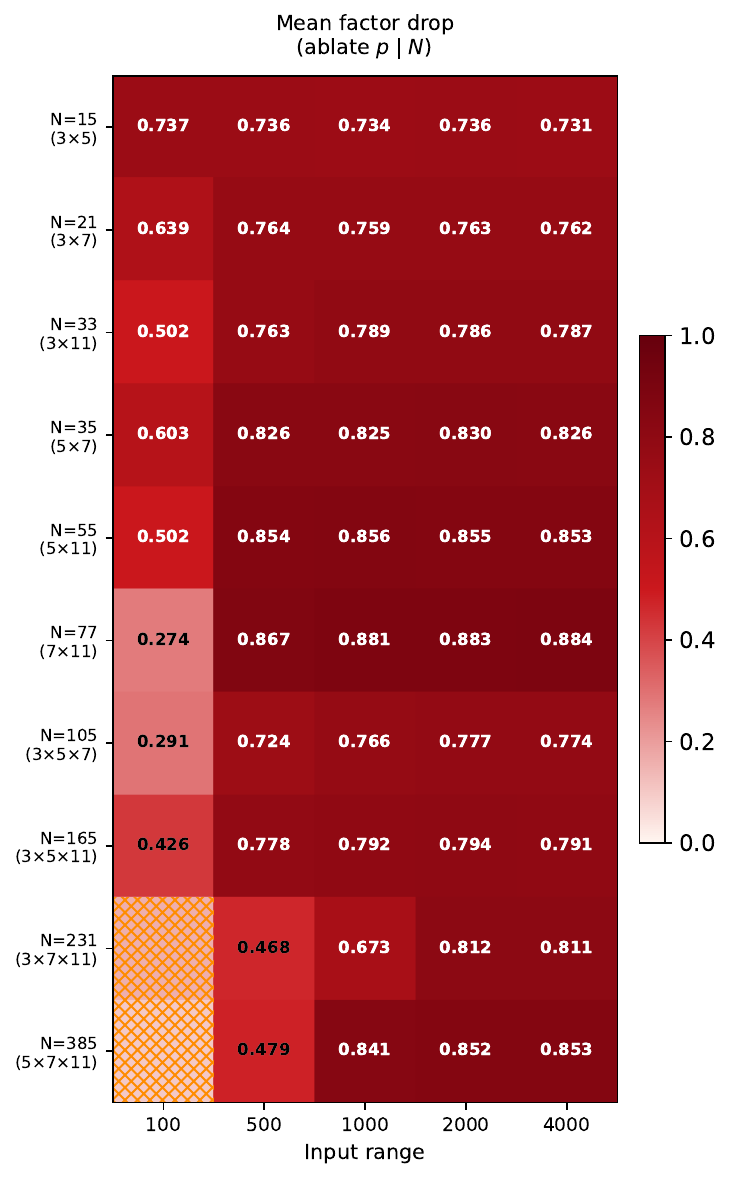}
\caption{Experiment 2. Mean factor drop. Rows index $N$; columns index $r$.}
\end{figure}

\paragraph{Results.} The results confirm the CRT prediction. For all 
non-degenerate squarefree composites and sufficiently large input 
ranges, the nonfactor drop is statistically indistinguishable from 
zero in 77\% of configurations, while factor drops range from 
$0.29$ to $0.88$ depending on $N$ and $r$. Perfect test accuracy 
($1.00$) is achieved across all non-degenerate configurations at 
$r \geq 500$, confirming that the block-diagonal routing is 
computationally complete. For two-factor composites, factor-channel 
selectivity is clean at all $r \geq 500$. Three-factor composites 
achieve perfect accuracy at $r \geq 500$ as well, though 
factor-channel activation is reduced at $r = 100$, where the input 
range is comparable to $N$ itself — a finite-sample phenomenon 
rather than a structural failure. The configurations $N = 231$ and 
$N = 385$ at $r = 100$ are excluded as degenerate: with 
$\max(a+b) = 198 < 231$, the modular sum never wraps.
\begin{figure}[t]
    \includegraphics[width=0.95\linewidth]{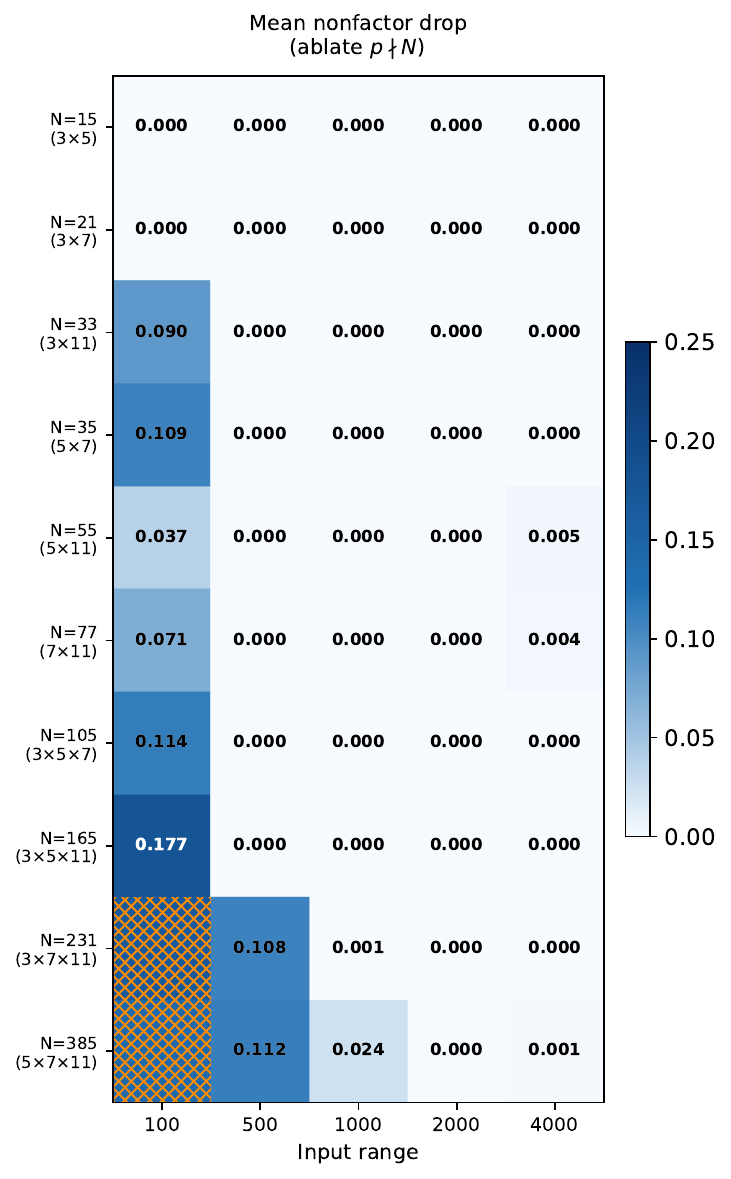}
\caption{Experiment 2. Mean nonfactor drop. Rows index $N$; columns index $r$.}
\end{figure}
\begin{figure}[t]
    \includegraphics[width=0.95\linewidth]{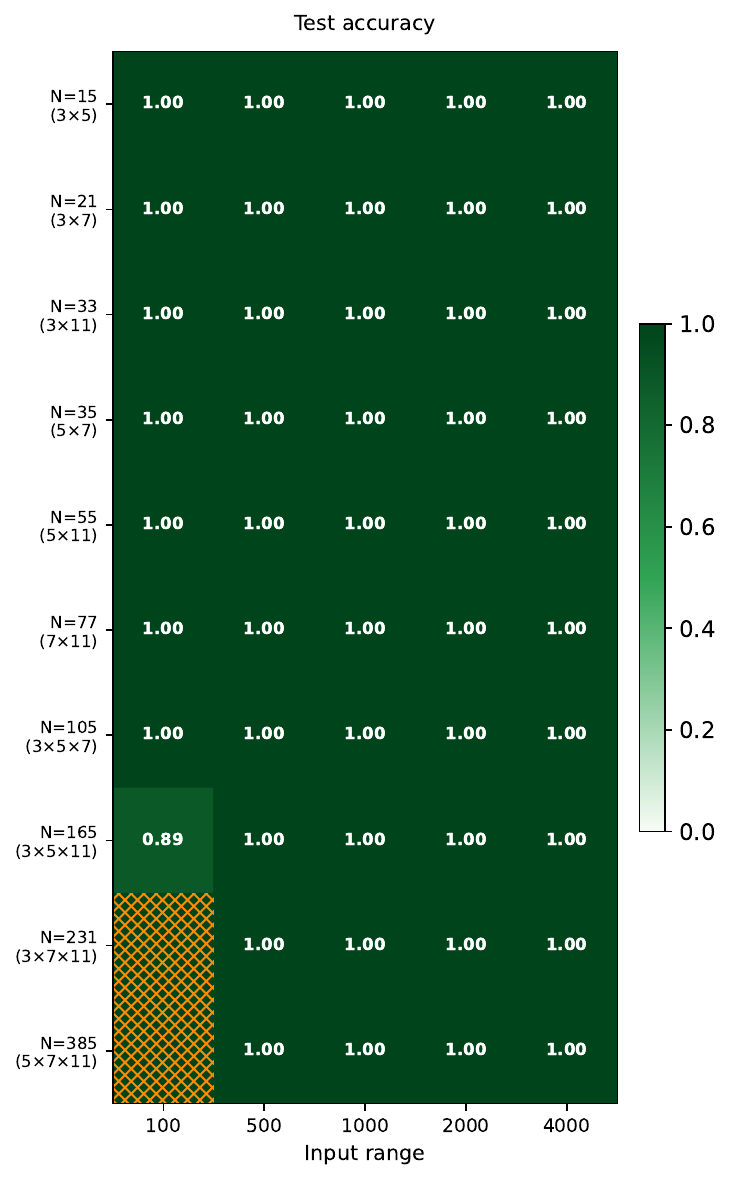}
\caption{Experiment 2. Test accuracy. Rows index $N$; columns index $r$.}
\end{figure}

\section{Discussion}

\paragraph{What the experiments confirm.} The ablation results 
provide empirical validation of Theorem~\ref{thm:block_diag}. 
Each prime channel operates independently, computation is routed 
through the particular channels relevant to the task modulus, and 
ablating an irrelevant channel has negligible effect. This is a 
falsifiable prediction of the representation theory confirmed across 
a systematic sweep of moduli, prime counts, and input ranges.

\paragraph{Structure from initialization.} 
Prior mechanistic work has shown that models trained with 
one-hot inputs on modular addition must reconstruct a Fourier 
basis from scratch through gradient descent 
\cite{nanda2023progress, gromov2023grokking} — confirming that 
the right algebraic structure exists but must be discovered 
implicitly. PFE provides this structure from initialization: 
task-relevant channels are precisely those predicted by the CRT 
to carry modular structure, and ablating them causes large 
accuracy drops while ablating irrelevant channels has negligible 
effect.

\paragraph{The principled basis argument.} The choice of prime 
frequencies is mathematically motivated rather than heuristic. 
By Ostrowski's theorem 
(Appendix~\ref{app:ostrowski}, Theorem~\ref{thm:ostrowski}), 
composite bases such as base-10 do not isolate prime-local 
structure into independent valuation channels — the $2$-adic 
and $5$-adic periodicities of base-10 share a single channel, 
leaving their separation as an implicit task for the network.
The adelic character factorization 
(Appendix~\ref{app:pontryagin}, Theorem~\ref{thm:adelic}) 
establishes that the Fourier basis on $\mathbb{A}_\mathbb{Q}$ 
factorizes into independent local components indexed by primes. 
PFE implements these components directly, making the prime 
basis a principled rather than arbitrary choice for modular 
arithmetic tasks whose structure is organized prime-locally 
by the Chinese Remainder Theorem.

\paragraph{Open questions.} Two directions remain open. First, 
why does gradient descent converge to an equivariant solution? The 
loss is invariant under simultaneous rotation but gradient descent 
is not guaranteed to preserve this symmetry. Second, the within-prime 
depth structure is not fully understood. Schur's lemma forces zero 
linear coupling across depth levels within the same prime, but the 
depth levels are not genuinely independent: there is a natural 
surjection $\mathbb{Z}/p^{d'+1}\mathbb{Z} \twoheadrightarrow 
\mathbb{Z}/p^{d+1}\mathbb{Z}$ for $d' > d$, so deeper blocks 
subsume the information in shallower ones. The theory predicts 
zero cross-depth coupling but says nothing about how gradient 
descent distributes importance across depth levels — a question 
whose answer would deepen our structural understanding of the 
embedding.

\section{Conclusion}

We have introduced Prime Fourier Embeddings (PFE), a numerical 
embedding whose frequency basis is derived from the adelic 
harmonic analysis of $\mathbb{Q}$. PFE implements the local 
$p$-adic characters as prime-indexed $(\cos, \sin)$ pairs, 
providing a pre-structured representation in which modular 
arithmetic reduces to selecting the relevant prime channel 
rather than discovering algebraic structure from scratch.

The central theoretical contribution is the Block-Diagonal 
Decomposition Theorem (Theorem~\ref{thm:block_diag}): any 
linear map equivariant with respect to the product action of 
$\prod_{p,d} \mathbb{Z}/p^{d+1}\mathbb{Z}$ on the PFE embedding 
must be block-diagonal, with one independent block per $(p,d)$ 
pair and zero coupling across blocks. This follows from viewing 
each PFE block as the real form of a character of the common 
additive group $\mathbb{Z}$, and applying Schur's lemma to the 
resulting non-isomorphic character components. For composite 
moduli, the Chinese Remainder Theorem predicts which prime-local 
blocks are relevant to the task. Both predictions are confirmed empirically: ablation studies confirm the block-diagonal prediction directly,
task-relevant channels show diagonal drops of $0.60$--$0.92$ while 
task-irrelevant channels are statistically inert in the majority 
of configurations.

The broader implication is that embedding design for arithmetic 
tasks can be guided by the mathematics of the objects being 
represented. Where the structure of learned representations is 
typically inferred post-hoc, here it is characterized before 
training and verified through targeted ablations.

\section*{Impact Statement}

This paper presents work at the intersection of numerical 
representation and mechanistic interpretability. The central 
result — that any linear map equivariant with respect to PFE 
must be block-diagonal with one independent block per prime — 
is a provable structural constraint on network behavior, not a 
post-hoc observation. Grounding interpretability claims in 
representation theory, where the structure of learned 
representations can be predicted and falsified before training, 
offers a principled path toward more reliable and transparent 
numerical reasoning in neural networks — and more broadly, 
toward the kind of falsifiable, mathematically grounded theory 
of deep learning called for by 
\citet{simon2026scientifictheorydeeplearning}. We do not 
foresee specific near-term harms from this work.

\bibliography{references}
\bibliographystyle{icml2026}

\appendix
\onecolumn

\section{Theoretical Background}
\label{app:theory}

This appendix collects definitions and results used in the main body.
We assume familiarity with basic abstract algebra at the level of
\cite{lang2002algebra} and point-set topology at the level of
\cite{munkres2000topology}. Definitions are included where precision
matters for the proofs or where terminology varies across communities;
standard graduate material is cited rather than reproved.

\subsection{Algebraic Structures}

Standard references: \cite{lang2002algebra, hungerford1974algebra}.

\begin{definition}[Monoid]
A \textbf{monoid} is a set $M$ equipped with a binary operation 
$\cdot : M \times M \to M$ satisfying: (i) associativity: 
$(a \cdot b) \cdot c = a \cdot (b \cdot c)$ for all $a, b, c \in M$, 
and (ii) existence of an identity element $e_M \in M$ with 
$e_M \cdot m = m \cdot e_M = m$ for all $m \in M$.
\end{definition}

\begin{definition}[Group]
A \textbf{group} is a monoid $(G, \cdot)$ in which every element 
has an inverse: for each $g \in G$ there exists $g^{-1} \in G$ with 
$g \cdot g^{-1} = g^{-1} \cdot g = e_G$. A group is \textbf{abelian} 
if additionally $g \cdot h = h \cdot g$ for all $g, h \in G$.
\end{definition}

\begin{definition}[Ring]
A \textbf{ring} is a set $R$ equipped with two binary operations 
$+$ and $\cdot$ such that $(R, +)$ is an abelian group with identity 
$0_R$, $(R, \cdot)$ is a monoid with identity $1_R$, and 
multiplication distributes over addition. A ring is 
\textbf{commutative} if additionally $a \cdot b = b \cdot a$.
\end{definition}

\begin{definition}[Ring Homomorphism and Isomorphism]
A \textbf{ring homomorphism} is a map $\phi : R \to S$ satisfying 
$\phi(a + b) = \phi(a) + \phi(b)$, $\phi(a \cdot b) = \phi(a) \cdot 
\phi(b)$, and $\phi(1_R) = 1_S$. A bijective ring homomorphism is a 
\textbf{ring isomorphism}, written $R \cong S$.
\end{definition}

\subsection{Topological Notions}
\label{app:topology}

Standard reference: \cite{munkres2000topology}.

\begin{definition}[Topological Space]
A \textbf{topological space} is a set $X$ together with a collection 
$\tau$ of subsets called \textbf{open sets}, satisfying: 
(i) $\emptyset$ and $X$ are open; (ii) arbitrary unions of open sets 
are open; (iii) finite intersections of open sets are open.
\end{definition}

\begin{definition}[Discrete Topology]
The \textbf{discrete topology} on a set $X$ is the topology in which 
every subset of $X$ is open.
\end{definition}

\begin{definition}[Continuous Map]
A map $f : X \to Y$ is \textbf{continuous} if $f^{-1}(U)$ is open 
in $X$ for every open set $U$ in $Y$.
\end{definition}

\begin{definition}[Quotient Topology]
Let $X$ be a topological space and ${\sim}$ an equivalence relation. 
The \textbf{quotient space} $X/{\sim}$ carries the \textbf{quotient 
topology}: $U \subset X/{\sim}$ is open iff its preimage under the 
projection $\pi : X \to X/{\sim}$ is open in $X$.
\end{definition}

\begin{definition}[Hausdorff Space]
$X$ is \textbf{Hausdorff} if any two distinct points have disjoint 
open neighborhoods.
\end{definition}

\begin{definition}[Compact Space]
$X$ is \textbf{compact} if every open cover has a finite subcover.
\end{definition}

\begin{definition}[Locally Compact Space]
$X$ is \textbf{locally compact} if every point has a neighborhood 
contained in a compact subset.
\end{definition}

\subsection{Representations and Characters}

Standard references: \cite{serre1977linear, fulton1991representation, 
terras1999fourier}.

\begin{definition}[Linear Representation]
A \textbf{representation} of a group $G$ is a pair $(V, \rho)$ where 
$V$ is a finite-dimensional vector space over $\mathbb{C}$ and 
$\rho : G \to \mathrm{GL}(V)$ is a group homomorphism.
\end{definition}

\begin{definition}[Irreducible Representation]
A representation $(V, \rho)$ is \textbf{irreducible} (an 
\textbf{irrep}) if $V \neq 0$ and the only $G$-stable subspaces are 
$\{0\}$ and $V$. By Maschke's theorem, every finite-dimensional 
representation of a finite group decomposes as a direct sum of 
irreps \cite{serre1977linear}.
\end{definition}

\begin{definition}[Character]
A \textbf{character} of an abelian group $G$ is a group homomorphism 
$\chi : G \to S^1 \subset \mathbb{C}^\times$.
\end{definition}

\begin{definition}[Equivariant Map and Intertwiner 
\cite{fulton1991representation}]
Let $G$ act on $V$ and $W$ via representations $\rho$ and $\sigma$. 
A linear map $T : V \to W$ is \textbf{equivariant} (a 
\textbf{$G$-intertwiner}) if
\[
    T \circ \rho(g) = \sigma(g) \circ T \qquad \text{for all } g \in G.
\]
\end{definition}

\begin{theorem}[Schur's Lemma]
\label{thm:schur}
Let $(V, \rho)$ and $(W, \sigma)$ be irreducible representations of 
$G$. If $T : V \to W$ is an intertwiner, then either $T = 0$ or $T$ 
is an isomorphism. If $V = W$ over an algebraically closed field, 
then $T = \lambda I$ for some $\lambda \in \mathbb{C}$.
\end{theorem}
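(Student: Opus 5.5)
The plan is the standard kernel/image argument, using only the definitions of representation, irreducibility and intertwiner given above. Throughout, $V$ and $W$ are finite-dimensional complex vector spaces, as in the definition of a linear representation. For the second assertion I read ``$V = W$'' as $(V,\rho) = (W,\sigma)$, the same representation.

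\emph{First step: $\ker T$ is $G$-stable.} If $v \in \ker T$, then $T(\rho(g)v) = \sigma(g)T(v) = 0$, so $\rho(g)v \in \ker T$. Since $(V,\rho)$ is irreducible, $\ker T$ is either $\{0\}$ or $V$. If $\ker T = V$, then $T = 0$ and we are done. Otherwise $T$ is injective.

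\emph{Second step: $\operatorname{im} T$ is $G$-stable.} We have $\sigma(g)T(v) = T(\rho(g)v) \in \operatorname{im} T$. Since $(W,\sigma)$ is irreducible, $\operatorname{im} T$ is either $\{0\}$ or $W$. In the injective case with $V \neq 0$, the image is nonzero, so $\operatorname{im} T = W$. Hence $T$ is bijective. Its inverse is automatically an intertwiner: apply $T^{-1}$ on both sides of $T\rho(g) = \sigma(g)T$. So $T$ is an isomorphism of representations, which settles the dichotomy.

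\emph{Third step: $T = \lambda I$ when $V = W$ and the field is algebraically closed.} Because $V$ is finite-dimensional and nonzero, the characteristic polynomial of $T$ has positive degree. Over $\mathbb{C}$ it therefore has a root $\lambda$, which is an eigenvalue of $T$. The map $T - \lambda I$ is again an intertwiner, since $\lambda I$ commutes with every $\rho(g)$. It is not injective, because it kills an eigenvector. By the dichotomy just proved, $T - \lambda I = 0$, that is, $T = \lambda I$.

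None of these steps is a real obstacle. The only points requiring care are the hypotheses used in the third step: finite dimensionality and algebraic closure are exactly what guarantee an eigenvalue, and the argument fails without them. For example, the real rotation $R(\theta_{p,d})$ on $\mathbb{R}^2$ commutes with the rotation action but is not scalar. This is why the application in Theorem~\ref{thm:block_diag} passes to the complexification before invoking this lemma.
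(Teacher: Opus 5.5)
Your proof is correct and follows the paper's argument: $G$-stability of $\ker T$ and $\mathrm{im}\,T$ gives the dichotomy, and an eigenvalue $\lambda$ of $T$ makes $T-\lambda I$ a non-invertible intertwiner, hence zero. You also make explicit several points the paper leaves implicit: $V\neq 0$ forces a nonzero image, finite-dimensionality plus algebraic closure guarantees an eigenvalue, and $T-\lambda I$ is an intertwiner only when $\rho=\sigma$, which is why ``$V=W$'' must be read as the same representation.
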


\begin{proof}
$\ker T$ is $G$-stable in $V$: by irreducibility, $\ker T = \{0\}$ 
or $V$. Similarly $\mathrm{im}\,T$ is $G$-stable in $W$: 
$\mathrm{im}\,T = \{0\}$ or $W$. These cases yield $T = 0$ or $T$ 
an isomorphism. When $V = W$ over an algebraically closed field, any 
eigenvalue $\lambda$ makes $T - \lambda I$ a non-invertible 
intertwiner, hence $T - \lambda I = 0$.
\end{proof}

\begin{theorem}[Irreps of $\mathbb{Z}/n\mathbb{Z}$]
\label{thm:irreps}
The irreducible complex representations of $\mathbb{Z}/n\mathbb{Z}$ 
are the $n$ characters
\[
    \chi_k(a) = e^{2\pi i k a / n}, \quad k = 0, 1, \ldots, n-1,
\]
forming a complete orthonormal basis for $L^2(\mathbb{Z}/n\mathbb{Z})$.
\end{theorem}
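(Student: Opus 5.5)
The plan is to prove the statement in three steps: first show that every irreducible representation of $\mathbb{Z}/n\mathbb{Z}$ is one-dimensional, then classify the one-dimensional representations, and finally check orthonormality and completeness in $L^2(\mathbb{Z}/n\mathbb{Z})$. Throughout, $G=\mathbb{Z}/n\mathbb{Z}$ and representations are finite-dimensional over $\mathbb{C}$, as in the definition of a linear representation above.

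\textbf{Step 1 (irreps are one-dimensional).} Let $(V,\rho)$ be irreducible. Since $G$ is abelian, for each fixed $h\in G$ the operator $\rho(h)$ commutes with every $\rho(g)$. Hence $\rho(h)$ is an intertwiner from $(V,\rho)$ to itself. By the second part of Schur's Lemma (Theorem~\ref{thm:schur}), which applies because $\mathbb{C}$ is algebraically closed and $V$ is finite-dimensional, we get $\rho(h)=\lambda_h I$. So every $\rho(g)$ is a scalar, and therefore every subspace of $V$ is $G$-stable. Irreducibility then forces $\dim V=1$. This is the step I expect to need the most care. The key point is that Schur's Lemma is invoked on $V=W$, and the existence of an eigenvalue is exactly where algebraic closure enters.

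\textbf{Step 2 (classification).} A one-dimensional representation is a homomorphism $\rho:G\to\mathbb{C}^\times$. Because $G$ is cyclic, $\rho$ is determined by $\zeta=\rho(1)$, and the relation $n\cdot 1=0$ forces $\zeta^n=1$. Thus $\zeta=e^{2\pi i k/n}$ for a unique $k\in\{0,\dots,n-1\}$, and $\rho=\chi_k$ with $\chi_k(a)=\zeta^a$. In particular $\rho$ takes values in $S^1$, so it is a character in the sense defined above. Conversely, each $\chi_k$ is well defined on residues because $\zeta^n=1$, and it is a homomorphism. It is irreducible because it is one-dimensional. Two one-dimensional representations are isomorphic exactly when they are equal as functions, since conjugation by a nonzero scalar is trivial. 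Hence $\chi_0,\dots,\chi_{n-1}$ are pairwise non-isomorphic, and they are all the irreps.

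\textbf{Step 3 (orthonormal basis).} Equip $L^2(\mathbb{Z}/n\mathbb{Z})$ with the normalized inner product $\langle f,g\rangle=\frac1n\sum_{a=0}^{n-1} f(a)\overline{g(a)}$. Then
\[
\langle\chi_k,\chi_l\rangle=\frac1n\sum_{a=0}^{n-1}\omega^a, \qquad \omega=e^{2\pi i(k-l)/n}.
\]
If $k=l$, this sum equals $1$. If $k\neq l$, then $\omega\neq1$ but $\omega^n=1$, so the geometric sum $\frac{\omega^n-1}{\omega-1}$ vanishes. Thus the $n$ characters are orthonormal. They are therefore linearly independent, and since $\dim L^2(\mathbb{Z}/n\mathbb{Z})=n$, they form a complete orthonormal basis.
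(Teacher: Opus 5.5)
Your proof is correct and follows the paper's route: Schur's Lemma forces every irrep of the abelian group to be one-dimensional, the relation $\rho(1)^n=1$ yields exactly the $n$ characters $\chi_k$, and orthonormality follows from character orthogonality. Your explicit geometric-sum computation with the normalized inner product $\frac1n\sum_a$ (the normalization that makes ``orthonormal'' literally true) and your dimension count for completeness spell out the orthogonality relations that the paper only cites from Terras.
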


\begin{proof}
Since $\mathbb{Z}/n\mathbb{Z}$ is abelian, every irrep is 
one-dimensional by Theorem~\ref{thm:schur}. A one-dimensional 
representation $\rho : \mathbb{Z}/n\mathbb{Z} \to \mathbb{C}^\times$ 
must map elements of order dividing $n$ into $n$-th roots of unity 
$\subset S^1$, giving the $n$ characters $\chi_k$. 
Orthonormality follows from character orthogonality relations; see 
\cite{terras1999fourier}, Chapters~1--2.
\end{proof}

\subsection{The Chinese Remainder Theorem}

Standard reference: \cite{ireland1990classical}.

\begin{theorem}[Chinese Remainder Theorem]
\label{thm:crt}
Let $n = p_1^{a_1} \cdots p_k^{a_k}$. Then
\[
    \mathbb{Z}/n\mathbb{Z} \;\cong\; \mathbb{Z}/p_1^{a_1}\mathbb{Z} 
    \;\times\; \cdots \;\times\; \mathbb{Z}/p_k^{a_k}\mathbb{Z}.
\]
The residues $(a \bmod p_i^{a_i})$ are mutually independent across 
distinct primes, and the groups $\mathbb{Z}/p_i^{a_i}\mathbb{Z}$ 
are pairwise non-isomorphic.
\end{theorem}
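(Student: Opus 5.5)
The plan is to realize the isomorphism through reduction maps, using the universal property of the quotient $\mathbb{Z}\twoheadrightarrow\mathbb{Z}/n\mathbb{Z}$. Write $q_i = p_i^{a_i}$, with the $p_i$ distinct primes and $a_i \ge 1$. Define
\[
    \phi : \mathbb{Z} \to \mathbb{Z}/q_1\mathbb{Z} \times \cdots \times \mathbb{Z}/q_k\mathbb{Z}, \qquad \phi(a) = (a \bmod q_1, \ldots, a \bmod q_k).
\]
This is a ring homomorphism in the sense of the earlier definition, because each coordinate is the canonical projection and these respect $+$, $\cdot$ and $1$. The whole argument reduces to computing $\ker\phi$ and then counting.

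\textbf{Step 1: the kernel.} By definition $\ker\phi = \bigcap_i q_i\mathbb{Z}$, the set of integers divisible by every $q_i$. Clearly $n\mathbb{Z}$ is contained in it. For the reverse inclusion I will show that if $m$ is divisible by each $q_i$ then $n \mid m$, by induction on $k$. The key lemma is that $q_i$ is coprime to $q_1\cdots q_{i-1}$. This holds because any common prime divisor would have to equal $p_i$ and also some $p_j$ with $j<i$, contradicting distinctness.

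Given that lemma, Bézout supplies $u,v$ with $u q_i + v\,(q_1\cdots q_{i-1}) = 1$. Multiplying by $m$ gives $m = u q_i m + v (q_1\cdots q_{i-1}) m$. Both terms on the right are divisible by $q_1\cdots q_i$ once we know $q_1\cdots q_{i-1}\mid m$ (which is the induction hypothesis) and $q_i \mid m$. Hence $\ker\phi = n\mathbb{Z}$.

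I expect this step to be the main obstacle. It is the only place where the hypothesis of pairwise distinct primes, i.e.\ pairwise coprime moduli, is genuinely used, and it rests on Bézout's identity, equivalently Euclid's lemma.

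\textbf{Step 2: the isomorphism.} By the first isomorphism theorem, $\phi$ induces an injective ring homomorphism
\[
    \bar\phi : \mathbb{Z}/n\mathbb{Z} \hookrightarrow \prod_i \mathbb{Z}/q_i\mathbb{Z}.
\]
Both sides have exactly $n = \prod_i q_i$ elements, so $\bar\phi$ is bijective and therefore an isomorphism. For an explicit inverse, one can instead build idempotents $e_i = (n/q_i)\cdot t_i$, where $t_i$ is an inverse of $n/q_i$ modulo $q_i$; this inverse exists by the same coprimality as in Step 1. Then $\sum_i r_i e_i$ is the preimage of $(r_1,\ldots,r_k)$.

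\textbf{Step 3: the two remaining claims.} For \emph{mutual independence}, bijectivity of $\bar\phi$ says every tuple $(r_1,\ldots,r_k)$ of residues is attained by exactly one class modulo $n$. Consequently, if $a$ is uniform on $\mathbb{Z}/n\mathbb{Z}$, its image is uniform on the product, and the residues $a \bmod q_i$ are independent, each uniform on its own factor.

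For \emph{pairwise non-isomorphism}, take $i \ne j$. Since $p_i \ne p_j$ and $a_i, a_j \ge 1$, unique factorization gives $q_i \ne q_j$. Groups of different finite orders cannot be isomorphic, so $\mathbb{Z}/q_i\mathbb{Z}$ and $\mathbb{Z}/q_j\mathbb{Z}$ are non-isomorphic.
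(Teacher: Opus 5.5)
Your proof is correct. The paper itself gives no argument here; it only cites Ireland--Rosen. The textbook proof behind that citation is essentially the constructive one you mention only as an aside in Step~2: build $e_i \equiv 1 \pmod{q_i}$ and $e_i \equiv 0 \pmod{q_j}$ for $j \ne i$ from a B\'ezout relation between $q_i$ and $n/q_i$, take $\sum_i r_i e_i$ for existence, and get uniqueness modulo $n$ from the same coprimality.

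Your main line packages this differently, as $\ker\phi = n\mathbb{Z}$ plus the first isomorphism theorem and a cardinality count. That route gets surjectivity for free but gives no formula for the preimage. The idempotent route is what actually performs the reconstruction shown in Figure~\ref{fig:aoe_crt}: for $n = 21$ one gets $e_1 = 7$ and $e_2 = 15$, so $(1,0) \mapsto 7$.

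Your Step~3 also argues two clauses that go beyond the cited theorem and that the paper never justifies. For ``mutually independent,'' you read it as independence of the coordinates under the uniform measure (or simply as joint surjectivity) and derive it from bijectivity. For non-isomorphism, you deduce it from $q_i \ne q_j$.

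You are right to state explicitly that the $p_i$ are distinct and $a_i \ge 1$, which the statement leaves implicit. A repeated prime breaks the isomorphism itself, since $\mathbb{Z}/9\mathbb{Z} \not\cong \mathbb{Z}/3\mathbb{Z} \times \mathbb{Z}/3\mathbb{Z}$. Allowing $a_i = a_j = 0$ would break the non-isomorphism clause.
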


\begin{proof}
See \cite{ireland1990classical}, Theorem~3.4.
\end{proof}

\subsection{Ostrowski's Theorem}
\label{app:ostrowski}

Standard reference: \cite{neukirch1999algebraic}.

\begin{theorem}[Ostrowski's Theorem]
\label{thm:ostrowski}
Every non-trivial absolute value on $\mathbb{Q}$ is equivalent to 
either the real absolute value $|\cdot|_\infty$ or the $p$-adic 
absolute value $|\cdot|_p$ for some prime $p$. There is no valid 
absolute value on $\mathbb{Q}$ corresponding to a composite base.
\end{theorem}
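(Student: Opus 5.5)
We prove the two parts separately: the classification of absolute values on $\mathbb{Q}$, and then the claim that no ``composite-base'' absolute value exists. The classification is the standard argument in \cite{neukirch1999algebraic}. Let $|\cdot|$ be a non-trivial absolute value on $\mathbb{Q}$. By multiplicativity, $|\cdot|$ is determined by its values on the positive integers, since $|-1|=1$ and $|a/b|=|a|/|b|$. We split into two cases:
\begin{itemize}
\item the \emph{non-Archimedean} case, where $|n|\le 1$ for all $n\in\mathbb{Z}$;
\item the \emph{Archimedean} case, where $|n_0|>1$ for some integer $n_0$.
\end{itemize}

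\textbf{Non-Archimedean case.} Consider $\mathfrak{a}=\{n\in\mathbb{Z}: |n|<1\}$.
\begin{itemize}
\item Because $|n|\le 1$ on $\mathbb{Z}$, the ultrametric bound $|x+y|\le\max(|x|,|y|)$ holds on $\mathbb{Z}$, hence on $\mathbb{Q}$. With multiplicativity, this makes $\mathfrak{a}$ an ideal.
\item $\mathfrak{a}$ is prime, since $|mn|=|m||n|<1$ forces $|m|<1$ or $|n|<1$.
\item $\mathfrak{a}$ is nonzero; otherwise $|\cdot|$ would be trivial on $\mathbb{Z}\setminus\{0\}$ and hence on $\mathbb{Q}$.
\end{itemize}
So $\mathfrak{a}=p\mathbb{Z}$ for a prime $p$. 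Writing $n=p^{v_p(n)}m$ with $p\nmid m$ gives $|m|=1$ and $|n|=|p|^{v_p(n)}$. Setting $|p|=p^{-s}$ with $s>0$, we get $|\cdot|=|\cdot|_p^{\,s}$, which is equivalent to $|\cdot|_p$.

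\textbf{Archimedean case.} Fix integers $m,n>1$ and expand $m$ in base $n$:
\[
m=a_0+a_1n+\dots+a_kn^k,\qquad 0\le a_i<n,\qquad k\le \log m/\log n.
\]
The triangle inequality and $|a_i|\le a_i<n$ give
\[
|m|\le (k+1)\,n\,\max(1,|n|)^{k}.
\]
Apply this to $m^t$ in place of $m$, take $t$-th roots, and let $t\to\infty$. The polynomial prefactor disappears and we obtain
\[
|m|\le \max(1,|n|)^{\log m/\log n}.
\]
This amplification step is the crux of the proof and the main obstacle; the rest is bookkeeping.

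If $|n|\le 1$ for some $n>1$, the inequality gives $|m|\le 1$ for every $m$, contradicting the Archimedean assumption. Hence $|n|>1$ for all $n>1$. The inequality then reads $|m|^{1/\log m}\le |n|^{1/\log n}$, and swapping the roles of $m$ and $n$ gives equality. So $|n|=n^{s}$ for a constant $s>0$, i.e.\ $|\cdot|=|\cdot|_\infty^{\,s}$, which is equivalent to $|\cdot|_\infty$.

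\textbf{No composite base.} This part is a direct counterexample, and we make precise what a composite-base absolute value would be. For composite $n$, the natural candidate is $|x|_n=n^{-v_n(x)}$, where $v_n(x)$ is the largest $k$ with $n^k\mid x$. This fails multiplicativity. For example, with $n=10$ we have $|2\cdot 5|_{10}=1/10$, but $|2|_{10}|5|_{10}=1$. More generally, take $n=ab$ with $1<a,b<n$. Then $v_n(a)=v_n(b)=0$ while $v_n(ab)=1$, so $|ab|_n\neq|a|_n|b|_n$. Consistently with the classification above, any genuine non-Archimedean absolute value has a \emph{prime} ideal $\{|x|<1\}\cap\mathbb{Z}$, so it cannot single out a composite $n$.
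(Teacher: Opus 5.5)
Your proof is correct and is essentially the argument the paper defers to (Neukirch, Ch.~II, \S3). It uses the same archimedean/non-archimedean dichotomy, and your explicit failure of multiplicativity for $n^{-v_n(x)}$ makes precise the informal ``no composite base'' clause that the paper's sketch leaves to the classification. The one step you assert without justification is that $|n|\le 1$ on $\mathbb{Z}$ forces the ultrametric inequality; it follows from the same amplification trick you use in the archimedean case, applied to $|x+y|^k\le (k+1)\max(|x|,|y|)^k$, which comes from the binomial expansion and $|\binom{k}{j}|\le 1$.
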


\begin{proof}[Proof sketch]
See \cite{neukirch1999algebraic}, Chapter~II, \S3. The key dichotomy 
is between archimedean absolute values — all equivalent to 
$|\cdot|_\infty$ — and non-archimedean ones, satisfying the 
ultrametric inequality $|x+y| \leq \max(|x|,|y|)$, classified by 
primes via the $p$-adic valuations.
\end{proof}

The completion of $\mathbb{Q}$ with respect to $|\cdot|_p$ is the 
\textbf{$p$-adic field} $\mathbb{Q}_p$, with ring of integers 
$\mathbb{Z}_p = \{x \in \mathbb{Q}_p : |x|_p \leq 1\}$ 
\cite{gouvea1997p}.

\subsection{Pontryagin Duality and the Adelic Character Factorization}
\label{app:pontryagin}

Standard references: \cite{ramakrishnan1999fourier, folland1995course}.

\begin{definition}[Topological Group]
A \textbf{topological group} is a group $G$ with a topology making 
multiplication and inversion continuous.
\end{definition}

\begin{definition}[Locally Compact Abelian Group]
A \textbf{locally compact abelian (LCA) group} is a topological group 
that is abelian, locally compact, and Hausdorff.
\end{definition}

\begin{definition}[Haar Measure]
A \textbf{Haar measure} on a LCA group $G$ is a nonzero 
translation-invariant Borel measure. Every LCA group admits a Haar 
measure, unique up to a positive scalar multiple \cite{folland1995course}.
\end{definition}

\begin{definition}[Restricted Product Topology]
Let $\{G_v\}_{v \in V}$ be locally compact Hausdorff groups, with 
compact open subgroups $K_v \subset G_v$ for all but finitely many 
$v$. The \textbf{restricted product} $\prod'_v (G_v, K_v)$ consists 
of tuples $(x_v)$ with $x_v \in K_v$ for all but finitely many $v$, 
with topology generated by sets 
$\prod_{v \in S} U_v \times \prod_{v \notin S} K_v$ for finite $S$ 
and open $U_v \subset G_v$. The adele ring 
$\mathbb{A}_\mathbb{Q} = \prod'_v (\mathbb{Q}_v, \mathbb{Z}_v)$ 
assembles all completions of $\mathbb{Q}$ — one archimedean place 
$\mathbb{R}$ and one $p$-adic place $\mathbb{Q}_p$ per prime — under 
this topology.
\end{definition}

The groups $\mathbb{R}$, $\mathbb{Q}_p$, $S^1 = \mathbb{R}/\mathbb{Z}$, 
finite groups under the discrete topology, and $\mathbb{A}_\mathbb{Q}$ 
are all LCA groups of interest in this paper.

\begin{definition}[Character and Pontryagin Dual]
A \textbf{character} of a LCA group $G$ is a continuous homomorphism 
$\chi : G \to S^1$. The group of all characters under pointwise 
multiplication is the \textbf{Pontryagin dual} $\widehat{G}$, itself 
a LCA group.
\end{definition}

\begin{theorem}[Pontryagin Duality \cite{folland1995course}]
\label{thm:pontryagin}
For any LCA group $G$, the map $G \to \widehat{\widehat{G}}$ sending 
$g \mapsto (\chi \mapsto \chi(g))$ is a topological group 
isomorphism.
\end{theorem}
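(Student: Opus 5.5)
We follow the standard harmonic-analytic route of \cite{folland1995course}, building everything from the Haar measure on $G$ and the Fourier transform $\hat f(\chi)=\int_G f(g)\overline{\chi(g)}\,dg$ for $f\in L^1(G)$. Write $\alpha : G\to\widehat{\widehat{G}}$, $\alpha(g)(\chi)=\chi(g)$.

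\textbf{Routine properties.} Give $\widehat{G}$ the compact-open topology. Then $\alpha(g)$ is a continuous character of $\widehat G$, and $\alpha$ is a homomorphism, since $\chi(gh)=\chi(g)\chi(h)$. Joint continuity of the evaluation $(g,\chi)\mapsto\chi(g)$ on locally compact spaces gives continuity of $\alpha$.

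\textbf{Analytic core.} We first identify $\widehat G$ with the Gelfand spectrum of the commutative Banach algebra $L^1(G)$, via $\chi\mapsto(f\mapsto\hat f(\chi))$. This shows that $\widehat G$ is locally compact Hausdorff and that $\hat f\in C_0(\widehat G)$. Next we prove Bochner's theorem for continuous positive-definite functions. From it we derive the Fourier inversion theorem: for a suitable Haar measure on $\widehat G$ and $f\in L^1(G)$ with $\hat f\in L^1(\widehat G)$ and $f$ continuous positive-definite (and hence, by linearity, on their span),
\[
f(g)=\int_{\widehat G}\hat f(\chi)\,\chi(g)\,d\chi .
\]
Plancherel's theorem follows by extending $f\mapsto \hat f$ to a unitary map $L^2(G)\to L^2(\widehat G)$.

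\textbf{Injectivity and topology.} Injectivity of $\alpha$ means that characters separate points. Take $g\neq e$ and a compact neighbourhood $V$ of $e$ with $gV\cap V=\emptyset$. Then $f=1_V*\widetilde{1_V}$ is continuous, positive-definite and integrable, with $f(e)>0$ and $f(g)=0$. By inversion, $f$ is determined by the values $\chi(g)$, so some $\chi$ must have $\chi(g)\neq 1$. To show that $\alpha$ is a homeomorphism onto its image, we show that the compact-open topology on $\widehat G$ is generated by the functions $\hat f$. We then check that neighbourhoods of $e$ in $G$ map onto neighbourhoods of the trivial character in $\widehat{\widehat G}$; this uses functions of the form $1_U*\widetilde{1_U}$ again. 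It follows that $\alpha(G)$ is locally compact, hence closed in $\widehat{\widehat G}$.

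\textbf{Surjectivity (main obstacle).} Suppose $\alpha(G)\subsetneq \widehat{\widehat G}$. Since $\alpha(G)$ is a closed subgroup, Urysohn-type arguments give a nonzero $\phi\in C_c(\widehat{\widehat G})$ supported away from $\alpha(G)$. Our first attempt will be to approximate $\phi$ in $L^2$ by Fourier transforms $\hat h$ of functions $h\in L^1\cap L^2(\widehat G)$. Combining Plancherel on $\widehat G$ with the inversion formula for $G$, every such $\hat h$ is determined by its restriction to $\alpha(G)$. This forces $\phi\perp L^2(\widehat{\widehat G})$, so $\phi=0$, a contradiction.

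The delicate step is matching the Haar measures and proving density of these transforms in $L^2(\widehat{\widehat G})$. If that becomes unwieldy, the fallback is structural. We reduce to compactly generated open subgroups, which are isomorphic to $\mathbb R^n\times\mathbb Z^m\times K$ with $K$ compact. We then verify duality on these directly, using Peter--Weyl for compact $K$, and pass to the general case by exactness of $G\mapsto\widehat G$ on closed subgroups and quotients.
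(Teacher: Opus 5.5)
Up to the closedness of $\alpha(G)$ you follow the analytic route of Folland that the paper cites: Gelfand spectrum of $L^1(G)$, Bochner, inversion, Plancherel, separation of points by $1_V*\widetilde{1_V}$, and local compactness of the image. One small correction there. What you need, and can prove with $1_U*\widetilde{1_U}$, is that for each neighbourhood $U$ of $e$ there is a neighbourhood $N$ of the trivial character with $\alpha(G)\cap N\subseteq\alpha(U)$. The set $\alpha(U)$ itself is not a neighbourhood in $\widehat{\widehat G}$ until surjectivity is known.

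\textbf{The gap is in the surjectivity step.} The fact that each $\hat h$, for $h\in L^1\cap L^2(\widehat G)$, is determined by $\hat h|_{\alpha(G)}$ is only a qualitative injectivity statement on a dense subspace. It does not survive $L^2$-limits. A proper closed subgroup is Haar-null unless it is open, so restriction to $\alpha(G)$ is typically not even defined on $L^2(\widehat{\widehat G})$. In any case, injectivity gives no bound of the form $\|\hat h\|_{2}\le C\|\hat h|_{\alpha(G)}\|$. Your $\phi$ is itself an $L^2$-limit of such $\hat h_n$ while $\phi|_{\alpha(G)}=0$, and nothing in the argument turns this into $\langle\phi,\hat h\rangle=0$. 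So ``this forces $\phi\perp L^2(\widehat{\widehat G})$'' does not follow. The issues you flag, density and matching of Haar measures, are not the obstacle: density is immediate from Plancherel on $\widehat G$, and normalisation constants are harmless.

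\textbf{The missing idea} is to make the bump function \emph{exactly} the Fourier transform of an $L^1$ function on $\widehat G$, so that uniqueness applies with no limit taken.
Pick $y_0\notin\alpha(G)$ and a compact symmetric neighbourhood $V$ of the identity in $\widehat{\widehat G}$ with $y_0VV\cap\alpha(G)=\emptyset$.
Plancherel for the LCA group $\widehat G$ is unitary \emph{onto} $L^2(\widehat{\widehat G})$, so you can write $1_{y_0V}=\hat h_1$ and $1_V=\hat h_2$ with $h_i\in L^2(\widehat G)$.
Then $\psi:=1_{y_0V}*1_V=(h_1h_2)^{\wedge}$ with $h_1h_2\in L^1(\widehat G)$.
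Moreover $\psi(y_0)=|V|>0$, while $\psi\circ\alpha\equiv 0$, i.e.\ $\int_{\widehat G}h_1h_2(\chi)\overline{\chi(g)}\,d\chi=0$ for all $g\in G$.
Integrating against $f\in L^1(G)$ and applying Fubini gives $\int_{\widehat G}h_1h_2\,\hat f\,d\chi=0$.
Since $\{\hat f: f\in L^1(G)\}$ is dense in $C_0(\widehat G)$ by Stone--Weierstrass, this forces $h_1h_2=0$, hence $\psi=0$, a contradiction.

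Your own approach can be rescued, but only with extra work you have not indicated. First upgrade ``determined by'' to the isometry $\|\hat h\|_{L^2(\widehat{\widehat G})}=\|\hat h\circ\alpha\|_{L^2(G)}$ for $h=\hat f$ with $f$ in the inversion class. Then show that the resulting isometry $T$ satisfies $T(mu)=(m\circ\alpha)\,Tu$ for all $m\in C_0(\widehat{\widehat G})$, and take $m=\phi$.

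The structural fallback is a genuinely different, Hewitt--Ross style proof. As stated, though, it relies on exactness facts that are usually derived from duality itself: extension of characters from closed subgroups, and $\widehat H\cong\widehat G/H^{\perp}$. It avoids circularity only if you pass to the general case through open subgroups, where characters extend algebraically because $S^1$ is divisible.
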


\begin{proof}[Proof sketch]
See \cite{folland1995course}, Chapter~4, Theorem~4.31. The key 
ingredients are Haar measure and the Fourier inversion theorem on 
LCA groups.
\end{proof}

Applied to the adele ring: $\mathbb{A}_\mathbb{Q}$ is self-dual 
($\widehat{\mathbb{A}_\mathbb{Q}} \cong \mathbb{A}_\mathbb{Q}$), 
and this self-duality, together with the restricted product 
structure of ${\mathbb{A}_\mathbb{Q}}$, implies that any 
character factorizes into independent local components:

\begin{theorem}[Factorization of Adelic Characters
\cite{ramakrishnan1999fourier}]
\label{thm:adelic}
Any character $\psi : \mathbb{A}_\mathbb{Q} \to S^1$ 
factorizes as a product of local characters,
\[
    \psi(x) = \prod_{v} \psi_v(x_v),
\]
where the product runs over all places $v$ of $\mathbb{Q}$ 
(one archimedean, one $p$-adic per prime), and $\psi_v = 1$ 
for all but finitely many $v$. In particular, the standard 
choice $\psi_\infty(x_\infty) = e^{-2\pi i x_\infty}$ and 
$\psi_p(x_p) = e^{2\pi i \{x_p\}_p}$ gives a factorization 
into independent prime-local components.
\end{theorem}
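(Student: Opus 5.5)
The proof I have in mind restricts $\psi$ to each local factor and then uses continuity with respect to the restricted product topology to show that all but finitely many local factors can be ignored. First, a correction to the statement: the condition ``$\psi_v = 1$ for all but finitely many $v$'' must be read as ``$\psi_v$ is trivial on $\mathbb{Z}_v$ for all but finitely many $v$''. Taken literally, it would be false for the standard character named at the end of the theorem: $\psi_p(x_p)=e^{2\pi i\{x_p\}_p}$ is nontrivial on $\mathbb{Q}_p$ for every $p$, and is trivial only on $\mathbb{Z}_p$. I will prove this corrected version, which is what \cite{ramakrishnan1999fourier} states.

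\textbf{Step 1 (local components).} For each place $v$, let $\iota_v:\mathbb{Q}_v\to\mathbb{A}_\mathbb{Q}$ place $x_v$ in the $v$-coordinate and $0$ elsewhere. This is a continuous homomorphism, since the image of an open $U_v$ is open in the restricted product topology of the definition above. Set $\psi_v=\psi\circ\iota_v$, a continuous character of $\mathbb{Q}_v$. For any finite set $S$ of places and any $x$ supported on $S$, additivity gives $\psi(x)=\prod_{v\in S}\psi_v(x_v)$.

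\textbf{Step 2 (no small subgroups; I expect this to be the main obstacle).} Fix an open neighbourhood $U$ of $1$ in $S^1$ that contains no nontrivial subgroup; the open arc $\{e^{i\theta}:|\theta|<\pi/2\}$ works. By continuity at $0$, $\psi^{-1}(U)$ contains a basic open set $\prod_{v\in S}U_v\times\prod_{v\notin S}\mathbb{Z}_v$ with $S$ finite and $\infty\in S$. Hence $\psi$ maps the subgroup $H_S=\{0\}_{S}\times\prod_{v\notin S}\mathbb{Z}_v$ into $U$. Since $\psi(H_S)$ is a subgroup of $S^1$ inside $U$, it is trivial. In particular $\psi_v|_{\mathbb{Z}_v}=1$ for every $v\notin S$, which is the finiteness condition.

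\textbf{Step 3 (factorization).} Take any adele $x$. Then $x_v\in\mathbb{Z}_v$ outside some finite set $T$. Write $x=y+z$, where $y$ is supported on $S\cup T$ and $z\in H_{S\cup T}\subseteq H_S$. By Step 2, $\psi(z)=1$, and by Step 1,
\[
\psi(x)=\prod_{v\in S\cup T}\psi_v(x_v)=\prod_v\psi_v(x_v).
\]
The infinite product makes sense because all factors with $v\notin S\cup T$ equal $1$.

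\textbf{Step 4 (the standard choice).} The map $\{\cdot\}_p:\mathbb{Q}_p\to\mathbb{Q}/\mathbb{Z}$ is additive modulo $\mathbb{Z}$, vanishes exactly on $\mathbb{Z}_p$, and is locally constant. Hence $\psi_p$ is a continuous character trivial on $\mathbb{Z}_p$, and $\psi_\infty$ is clearly a continuous character of $\mathbb{R}$. For any adele only finitely many factors of $\prod_v\psi_v(x_v)$ differ from $1$. So the product defines a homomorphism, and it is continuous because on the open subgroup $\mathbb{R}\times\prod_p\mathbb{Z}_p$ it reduces to $\psi_\infty$. Applying the argument of Steps 1 to 3 to this product character recovers exactly these local components.
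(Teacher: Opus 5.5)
Your argument is correct and follows the same route as the paper's proof sketch, which defers to \cite{ramakrishnan1999fourier} for the restricted product structure and for the triviality of almost all local components on $\mathbb{Z}_p$; you actually prove that triviality via the no-small-subgroups property of $S^1$, and your reading of the finiteness condition as $\psi_v|_{\mathbb{Z}_v}=1$ is exactly what that sketch uses. One slip in Step~1: $\iota_v$ is continuous because the preimage of a basic open set $\prod_{w\in S}U_w\times\prod_{w\notin S}\mathbb{Z}_w$ is $U_v$, $\mathbb{Z}_v$, or $\emptyset$, all open in $\mathbb{Q}_v$, and not because $\iota_v(U_v)$ is open (it is not, since all its other coordinates are fixed at $0$).
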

\begin{proof}[Proof sketch]
See \cite{ramakrishnan1999fourier}, Chapters~4--5. The 
factorization follows from the restricted product structure 
of $\mathbb{A}_\mathbb{Q}$ and the fact that any character 
of a restricted product group decomposes into local 
characters that are trivial on the compact open subgroups 
$\mathbb{Z}_p$ for all but finitely many primes.
\end{proof}
\newpage
\section{Additional Experimental Figures}
\label{app:figures}
\subsection{Experiment 1: Per-Prime Ablation Profiles}
\begin{figure}[H]
\begin{center}
\centerline{\includegraphics[width=0.9\linewidth]{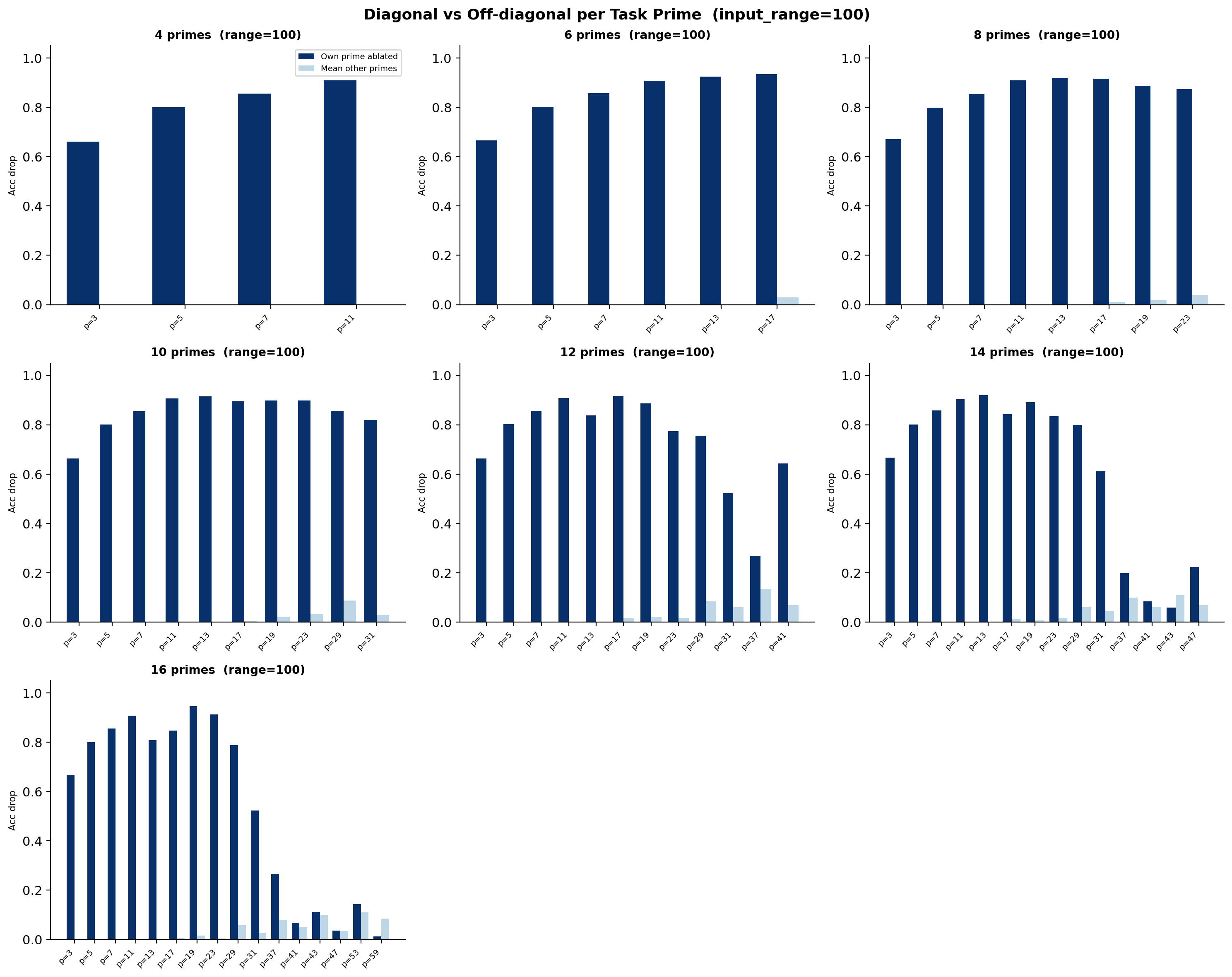}}
\caption{Per-prime ablation profiles, Experiment~1, $r = 100$.}
\label{fig:app_exp1_r100}
\end{center}
\end{figure}
\begin{figure}[H]
\begin{center}
\centerline{\includegraphics[width=\linewidth]{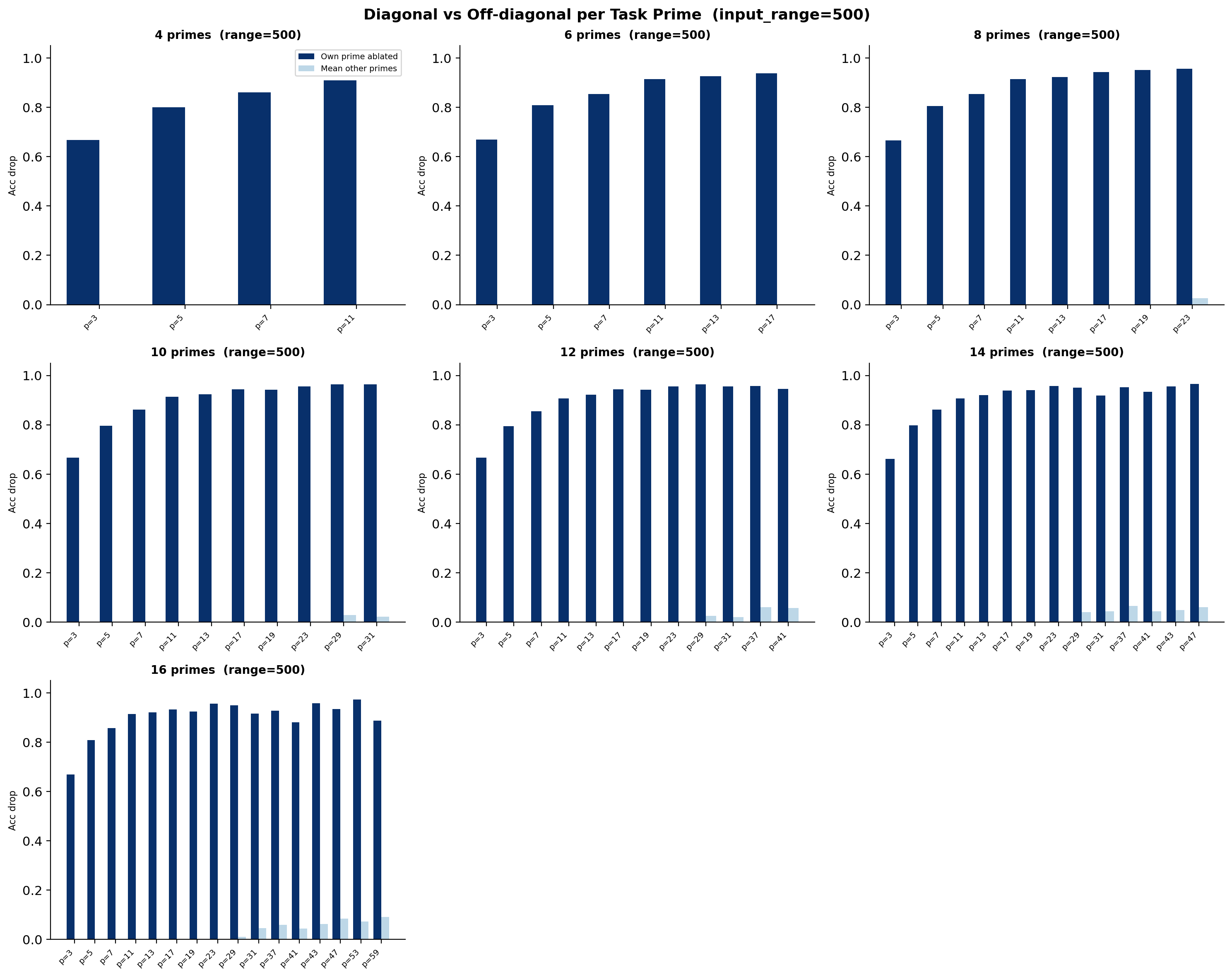}}
\caption{Per-prime ablation profiles, Experiment~1, $r = 500$.}
\label{fig:app_exp1_r500}
\end{center}
\end{figure}

\begin{figure}[H]
\vskip 0.2in
\begin{center}
\centerline{\includegraphics[width=\linewidth]{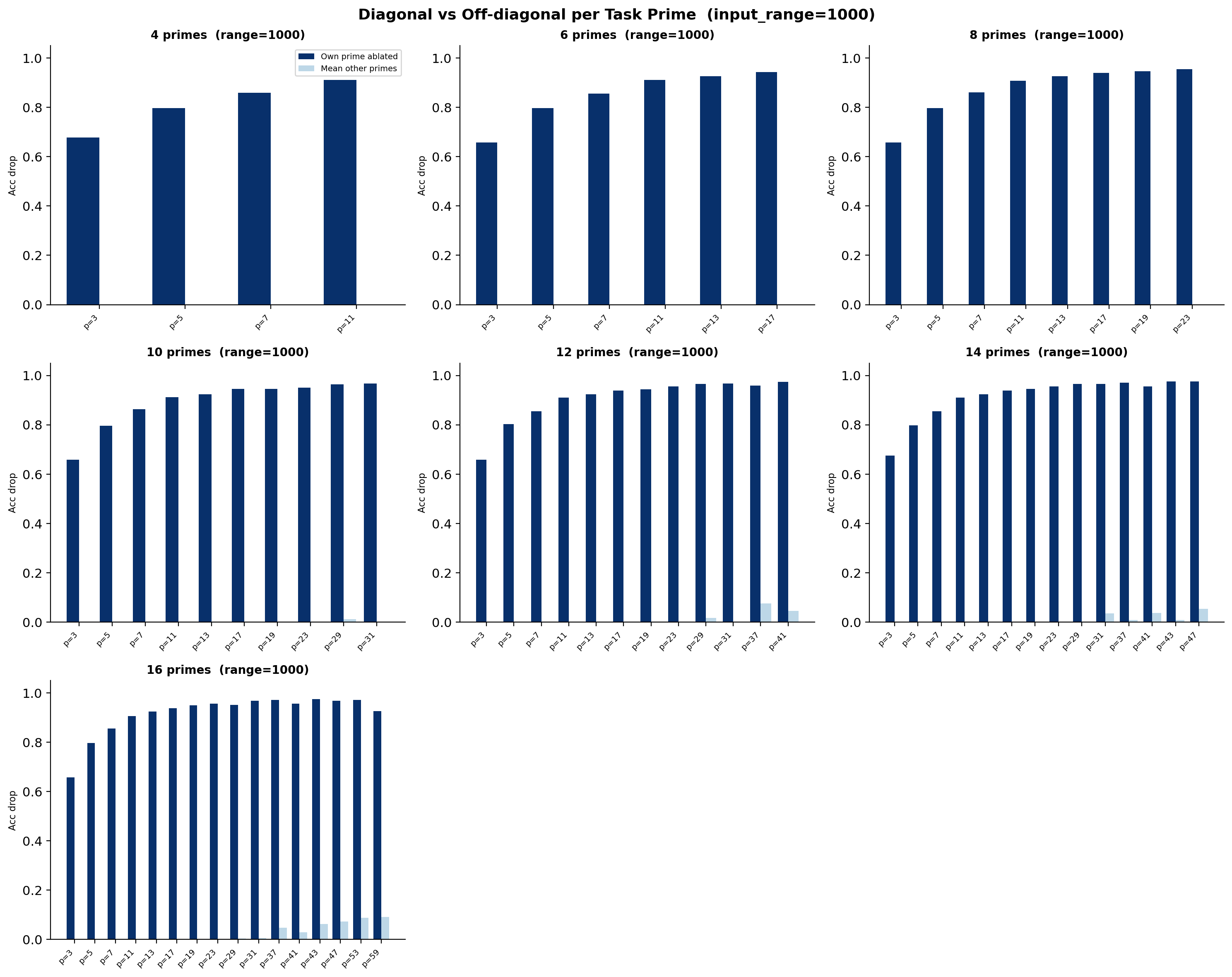}}
\caption{Per-prime ablation profiles, Experiment~1, $r = 1000$.}
\label{fig:app_exp1_r1000}
\end{center}
\end{figure}

\begin{figure}[H]
\vskip 0.2in
\begin{center}
\centerline{\includegraphics[width=\linewidth]{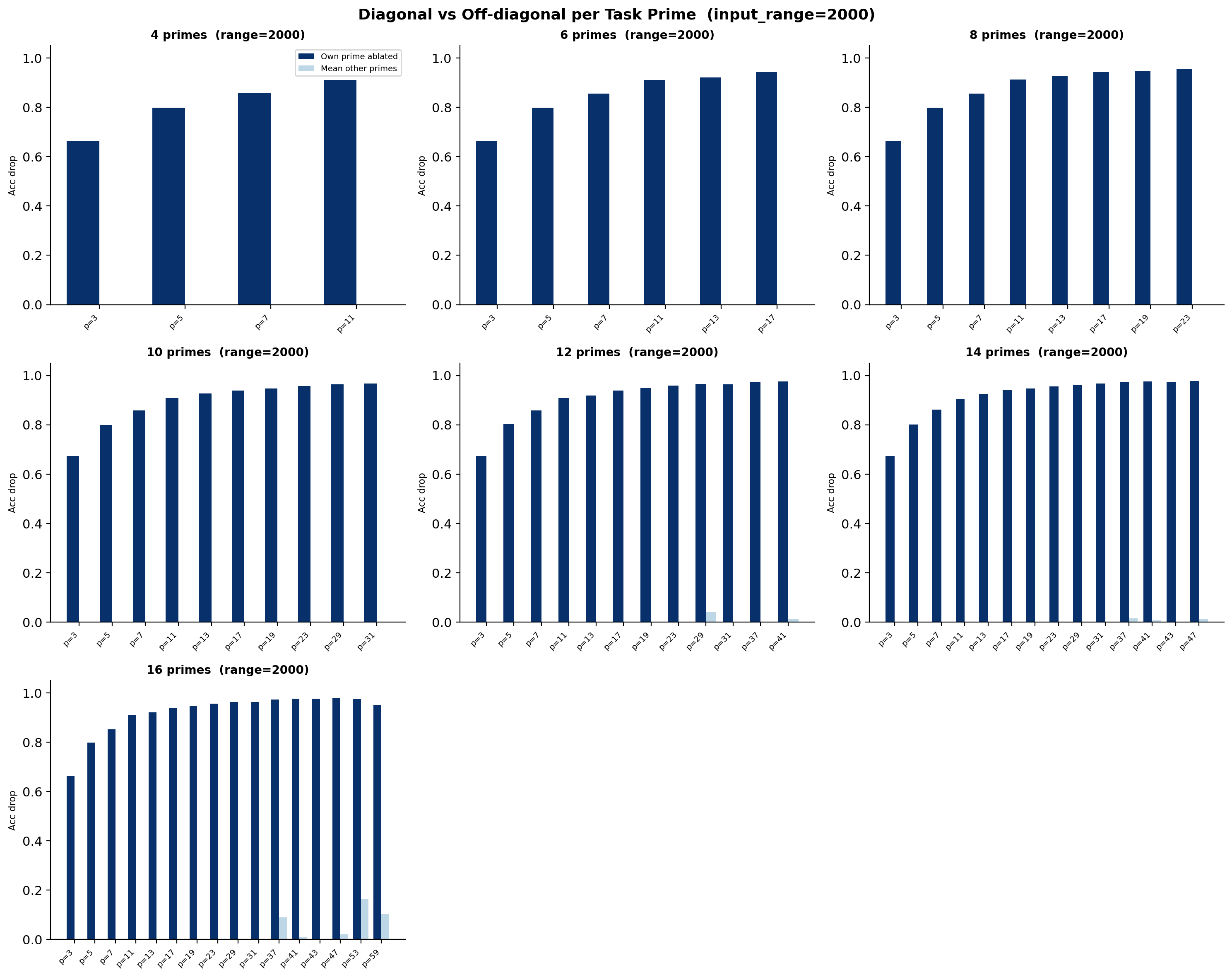}}
\caption{Per-prime ablation profiles, Experiment~1, $r = 2000$.}
\label{fig:app_exp1_r2000}
\end{center}
\end{figure}

\begin{figure}[H]
\vskip 0.2in
\begin{center}
\centerline{\includegraphics[width=\linewidth]{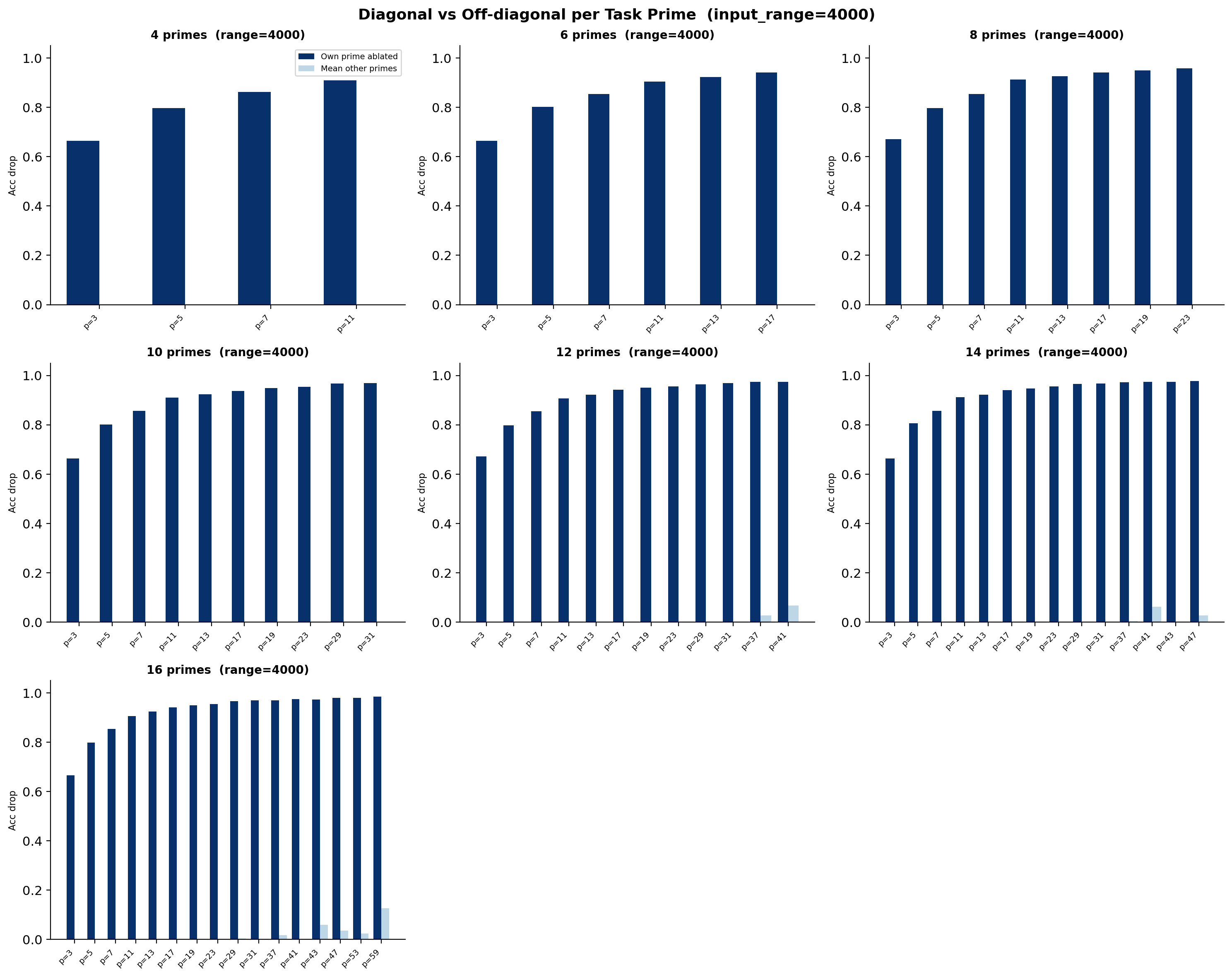}}
\caption{Per-prime ablation profiles, Experiment~1, $r = 4000$.}
\label{fig:app_exp1_r4000}
\end{center}
\end{figure}
\clearpage
\subsection{Experiment 1: Supplementary Diagnostic Plots}

\begin{figure}[H]
\vskip 0.2in
\begin{center}
\centerline{\includegraphics[width=\linewidth]{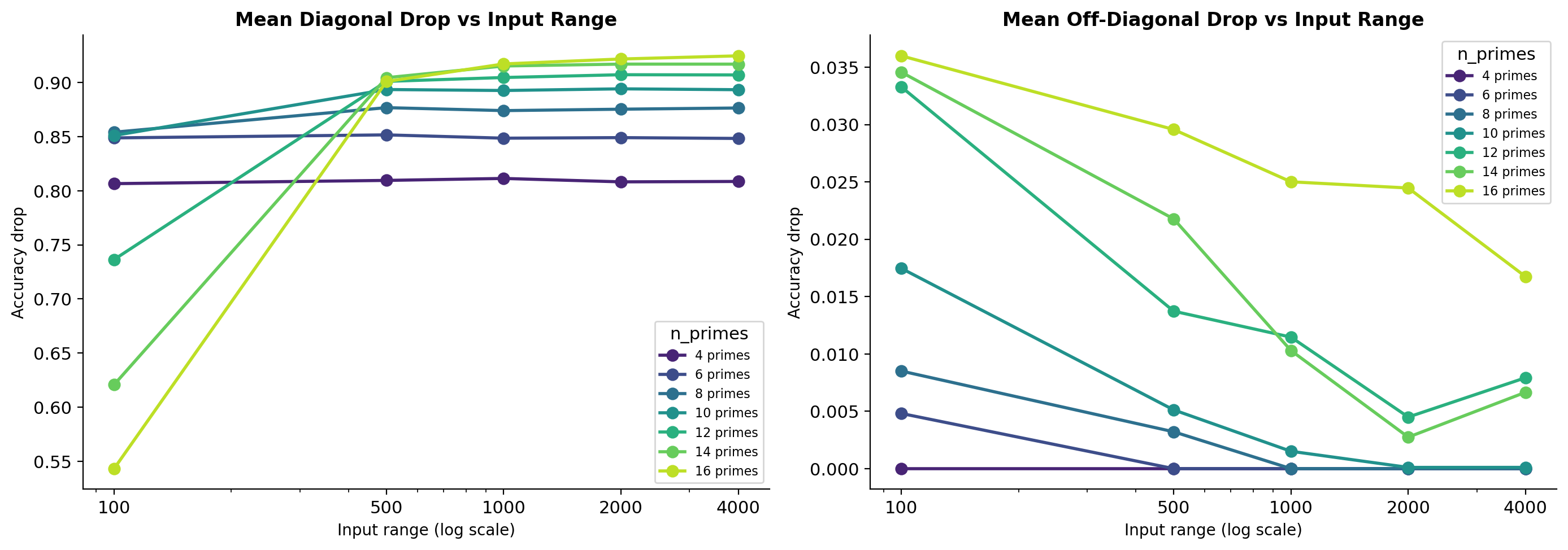}}
\caption{Mean diagonal and off-diagonal drop as a function of input 
range for each value of $|\mathcal{P}|$.}
\label{fig:app_exp1_lines}
\end{center}
\end{figure}
\clearpage
\subsection{Experiment 2: Per-Composite Profiles Across All Input Ranges}
\begin{figure}[H]
\begin{center}
\centerline{\includegraphics[width=0.9\linewidth]{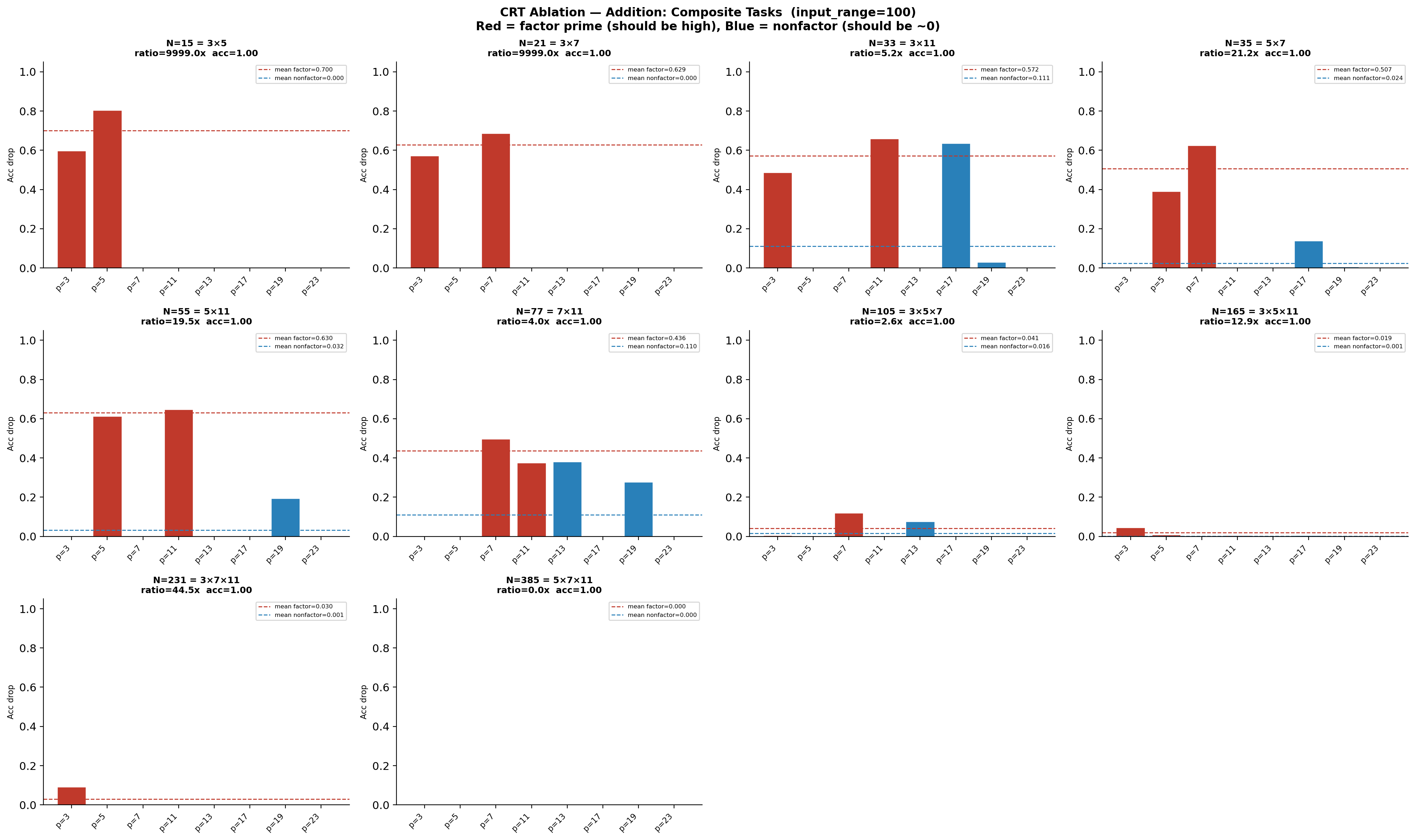}}
\caption{CRT ablation profiles, all composites, $r = 100$.}
\label{fig:app_exp2_r100}
\end{center}
\end{figure}
\begin{figure}[H]
\begin{center}
\centerline{\includegraphics[width=0.9\linewidth]{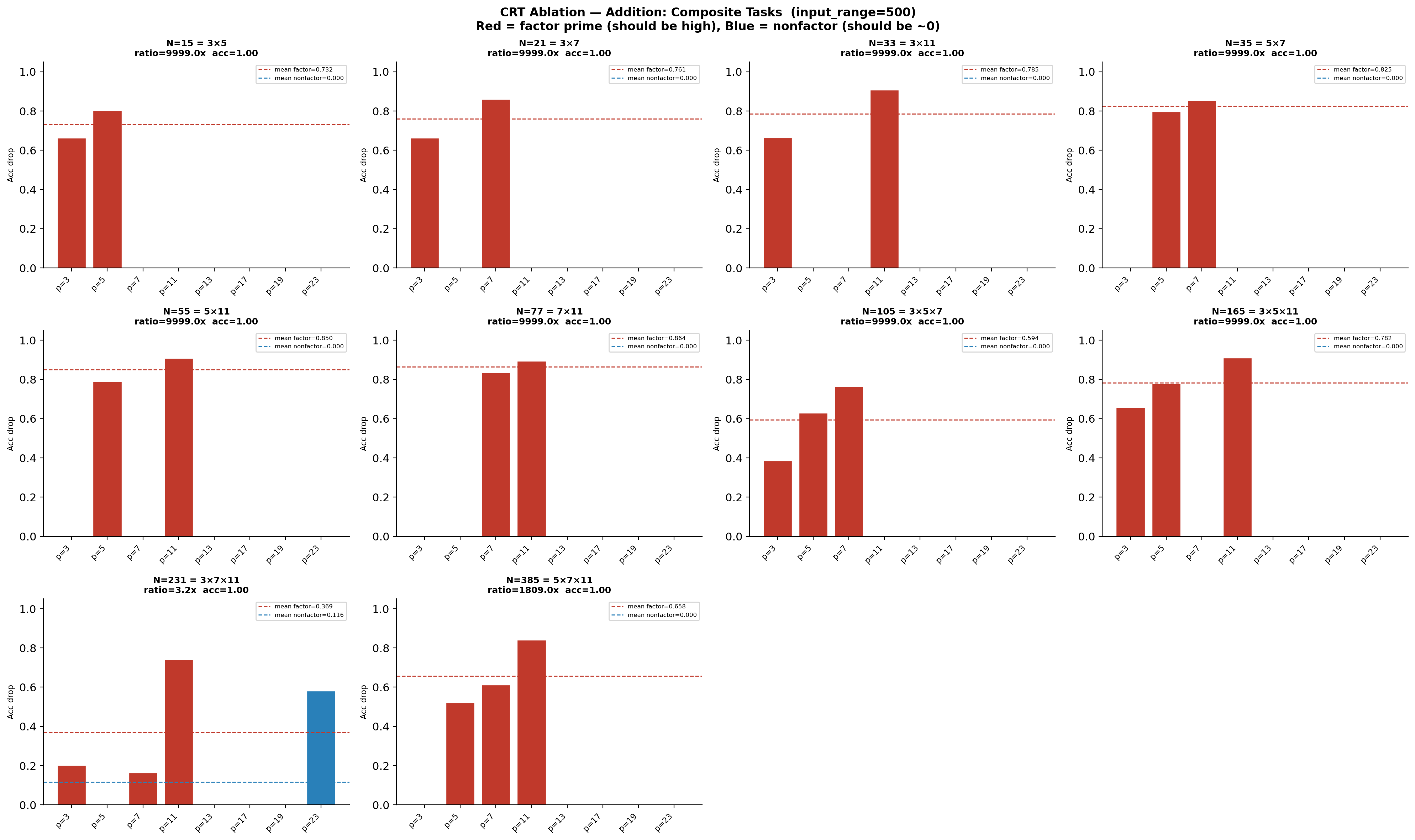}}
\caption{CRT ablation profiles, all composites, $r = 500$.}
\label{fig:app_exp2_r500}
\end{center}
\end{figure}
\begin{figure}[H]
\vskip 0.2in
\begin{center}
\centerline{\includegraphics[width=0.9\linewidth]{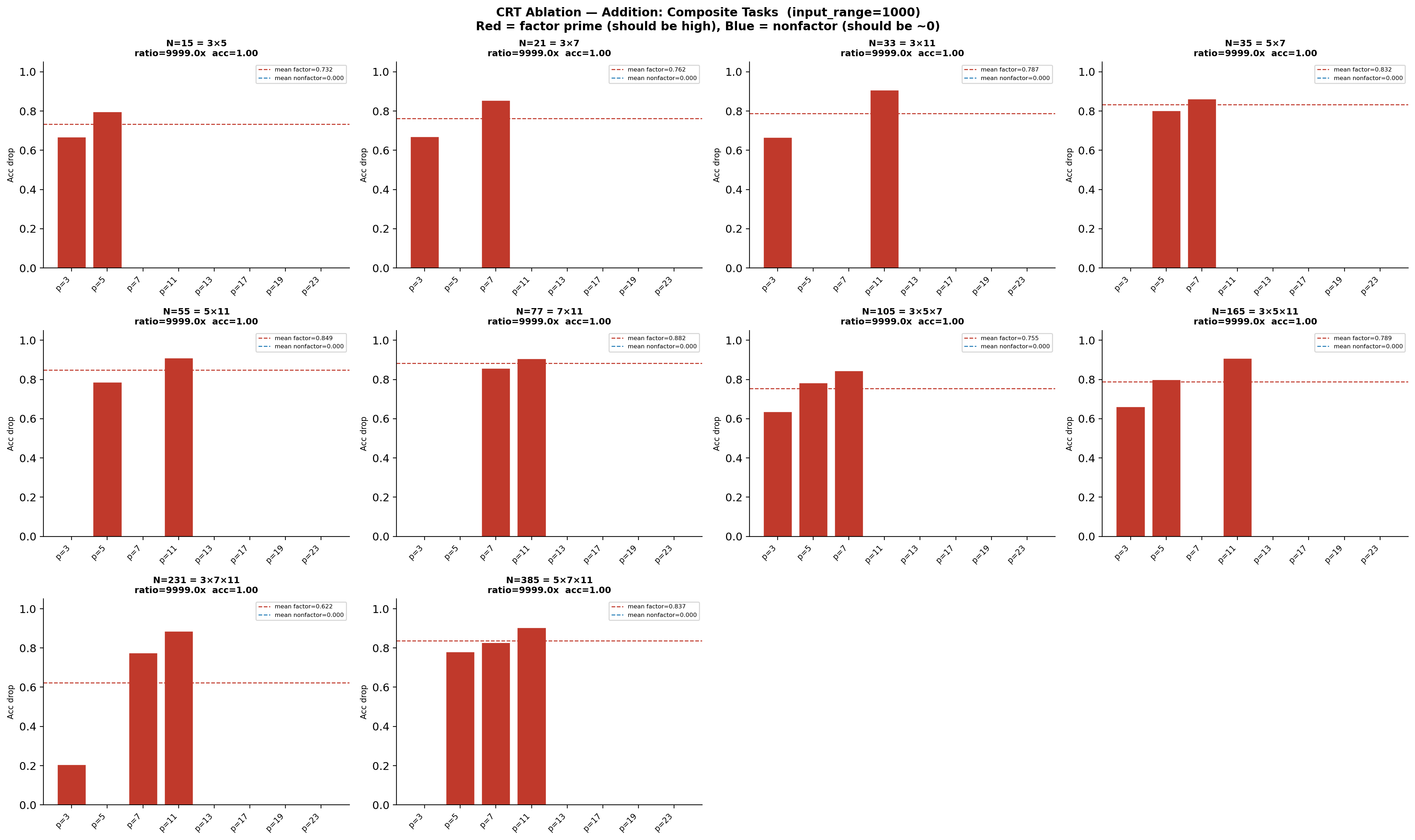}}
\caption{CRT ablation profiles, all composites, $r = 1000$.}
\label{fig:app_exp2_r1000}
\end{center}
\end{figure}
\begin{figure}[H]
\vskip 0.2in
\begin{center}
\centerline{\includegraphics[width=0.9\linewidth]{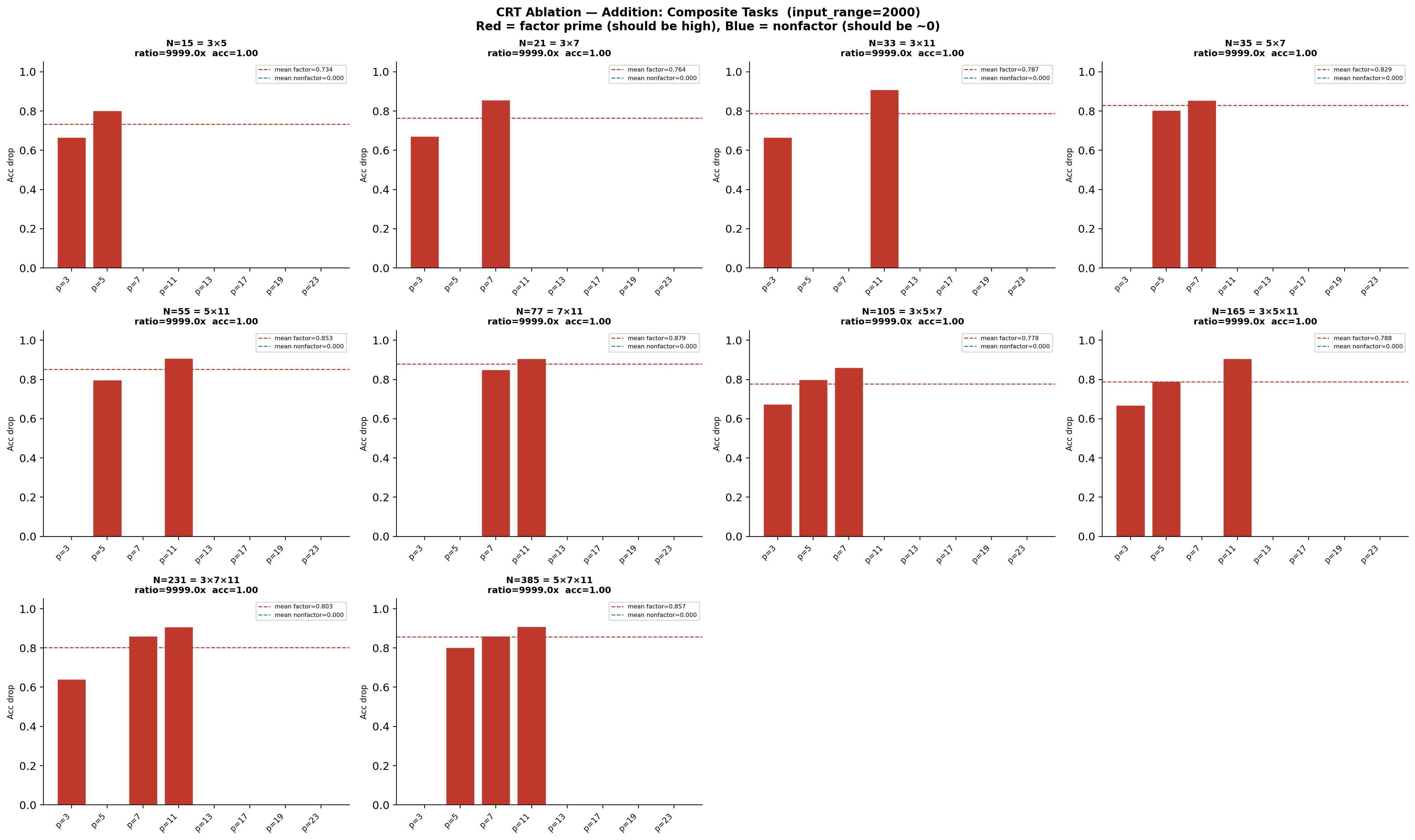}}
\caption{CRT ablation profiles, all composites, $r = 2000$.}
\label{fig:app_exp2_r2000}
\end{center}
\end{figure}
\begin{figure}[H]
\vskip 0.2in
\begin{center}
\centerline{\includegraphics[width=0.9\linewidth]{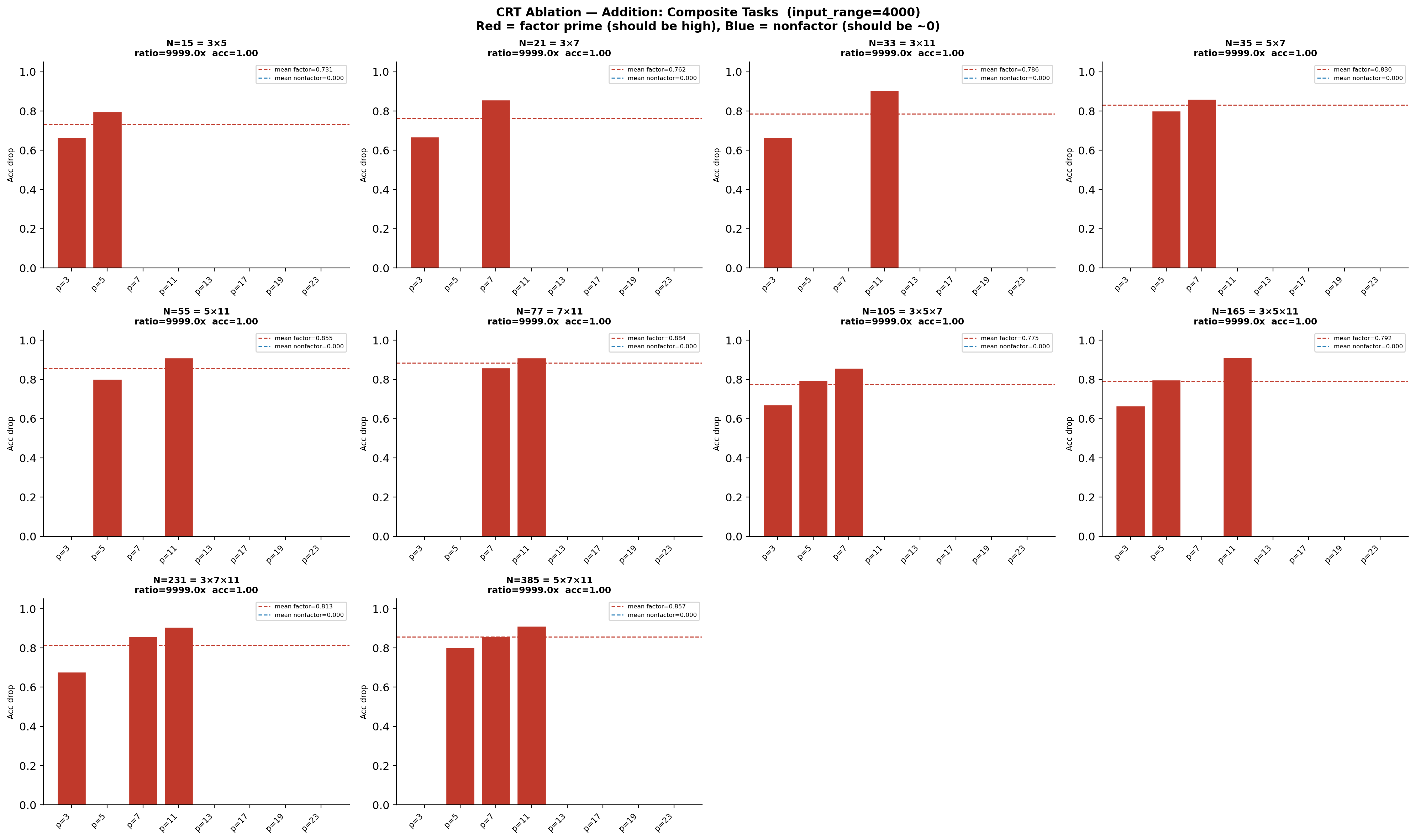}}
\caption{CRT ablation profiles, all composites, $r = 4000$. }
\label{fig:app_exp2_r4000}
\end{center}
\end{figure}

\end{document}